\documentclass[letterpaper]{article} % DO NOT CHANGE THIS
\usepackage{arxiv}  % DO NOT CHANGE THIS
\usepackage[hyphens]{url}  % DO NOT CHANGE THIS
\usepackage{graphicx} % DO NOT CHANGE THIS
\usepackage{natbib}  % DO NOT CHANGE THIS AND DO NOT ADD ANY OPTIONS TO IT
\usepackage{caption} % DO NOT CHANGE THIS AND DO NOT ADD ANY OPTIONS TO IT
\usepackage{algorithm}
\usepackage{algorithmic}
\usepackage{amsmath}
\usepackage{amsfonts}
\usepackage{amsthm}

\newtheorem{lemma}{Lemma}[section]
\newtheorem{proposition}[lemma]{Proposition}
\newtheorem{corollary}[lemma]{Corollary}

\usepackage{listings}
\DeclareCaptionStyle{ruled}{labelfont=normalfont,labelsep=colon,strut=off} % DO NOT CHANGE THIS
\floatstyle{ruled}
\newfloat{listing}{tb}{lst}{}
\floatname{listing}{Listing}

\usepackage{booktabs}

\newcommand{\Method}{GCPO }
\usepackage[table]{xcolor}
\usepackage{multirow}
\usepackage{amsmath}
\usepackage{newfloat}
\usepackage{amsfonts}
\nocopyright
\title{GCPO: Diagnosing and Constraining Subspace Geometry in Rollout RL for LLMs}
\author{
    Kai Yang\textsuperscript{\rm 1,2,3}\equalcontrib, 
    Jingwei Xu\textsuperscript{\rm 3}\equalcontrib, 
    Wanyu Wang\textsuperscript{\rm 4}, 
    Kai-Yuan Guo\textsuperscript{\rm 5}, 
    Zhenbo Yu\textsuperscript{\rm 3,6}\corresponding, 
    Yi Wang\textsuperscript{\rm 2}\corresponding\thanks{Contributions made while affiliated with Shanghai AI Lab.}, 
    Yu Qiao\textsuperscript{\rm 1,2}\corresponding
}
\affiliations{
    \textsuperscript{\rm 1}School of Computer Science, Shanghai Jiao Tong University \\
    \textsuperscript{\rm 2}Shanghai Artificial Intelligence Laboratory \\
    \textsuperscript{\rm 3}NovaCore \\
    
    \textsuperscript{\rm 4}SJTU Paris Elite Institute of Technology, Shanghai Jiao Tong University \\
    \textsuperscript{\rm 5}School of Artificial Intelligence and Automation, Huazhong University of Science and Technology \\
    \textsuperscript{\rm 6}School of Artificial Intelligence, Shanghai Jiao Tong University \\

    {\{icarus1411, wwyspeit\}@sjtu.edu.cn, \{jingwei.xu, zhenbo.yu\}@deepnovacore.com, gky@hust.edu.cn, wygamle@gmail.com, qiaoyu@pjlab.org.cn}

}

\begin{document}

\maketitle

%comment：
%1.Principal-Subspace和Intrusion在这里缺少严格定义，应该不能直接讲，reviewer大概率get不到精确含义
%2.第一句“Rollout-based......”不用讲太多，甚至不用提，建议尽快进入讲问题的第二句。
%3.我直觉上需要提：principal subspace 捕捉的是预训练或先前任务的核心知识，那么侵入它确实可能导致遗忘，而不是直接说现在的RL会遗忘
%4.要highlight咱们的performance和抗遗忘能力（maybe还有entropy相关的）？
%5.可以斟酌一下要不要体现simple yet effectivede的感觉
%这是一个示例，供参考：（还缺少实验结果的数字支撑）
\begin{abstract}
On-policy rollout methods such as GRPO are central to post-training of large language models. Yet, they frequently suffer from training instabilities, cross-task capability degradation, and response-length inflation. Although prior work has characterized the subspace geometry of aggregate updates, the stepwise variation of this geometry and its relationship to model performance remain unclear. We introduce \emph{Principal-Subspace Overlap}, a dimension-corrected measure of individual rollout updates relative to the dominant singular subspaces of pretrained weights. Despite low average overlap, transient spikes often precede performance degradation. To address this, we propose \textbf{GCPO} (\textbf{G}eometrically \textbf{C}onstrained \textbf{P}olicy \textbf{O}ptimization), which applies hard bilateral orthogonal projections to constrain updates to the complementary subspaces, preventing such excursions by construction. Across mathematical reasoning, code generation, and tool-use tasks on Qwen3-8B and GLM4-9B, GCPO consistently outperforms GRPO and recent variants, including DAPO and GSPO, improving over the base models and the strongest baseline by up to 27.69 and 2.37 points, respectively. Furthermore, GCPO preserves general capabilities, eliminates response-length inflation, and stabilizes policy entropy. Our findings provide a new diagnostic lens and a principled design perspective for stable reinforcement learning post-training.

\end{abstract}

% Uncomment the following to link to your code, datasets, an extended version or similar.
% You must keep this block between (not within) the abstract and the main body of the paper.
% Make sure that you do not de-anonymize yourself with these links.
\begin{links}
    \link{Code \& Datasets}{https://github.com/Icarus1411/GCPO}
    % \link{Datasets}{https://aaai.org/example/datasets}
    % \link{Extended version}{https://github.com/Icarus1411/GCPO}
\end{links}

\section{Introduction}

Rollout-based reinforcement learning (RL) has become a widely used post-training mechanism for improving large language models (LLMs) on mathematical reasoning, coding, and tool use \cite{PPO, RL-survey, gao2023scaling}. Its optimization data, however, are generated by the evolving policy itself: each update changes both the policy and the rollout distribution used to construct the next update. This feedback loop can yield unstable training, degradation on capabilities outside the optimized task, and response-length inflation \cite{harmon2025mapping, ouyang2022training}.

Most existing remedies stabilize RL through objective- or policy-level controls. KL regularization constrains divergence from a reference policy, clipping limits large likelihood-ratio changes, and reward design modifies the optimization signal \cite{GSPO, DAPO, GMPO}. While effective, these methods may treat different updates as similar if they have similar policy-level statistics, even when their directions in parameter space differ. Because updates of similar magnitude may interact differently with the structured parameter space of the pretrained model, we ask whether their stepwise directions provide a complementary diagnostic of training dynamics.

We focus on the dominant singular subspaces of each pretrained linear weight matrix. The top right and left singular vectors identify the dominant input
and output directions, respectively. Together, they define the optimal rank-$k$ approximation of the pretrained transformation~\cite{EYM}. Updates
overlapping these subspaces can therefore alter structurally prominent components of the pretrained transformation. Rather than interpreting these directions as a literal decomposition of semantic knowledge, we treat them as a functionally distinguished yet tractable structural reference. Prior analyses further show that RL updates are predominantly off-principal when averaged over training \cite{GeometryOPD2026, DSSU, schneider2024identifyingpolicygradientsubspaces}. This aggregate regularity leaves open an important question: do individual updates exhibit transient principal-subspace overlap, and how is such overlap related to training performance?

Our stepwise analysis reveals transient entry of individual updates into the pretrained principal subspaces. We decompose each realized update into four blocks according to its left- and right-side overlap with these subspaces, and subtract the overlap expected under an isotropic null. Although most update energy lies in the doubly orthogonal block, aggregate statistics obscure intermittent spikes in \emph{excess principal-subspace overlap}.
Across the runs we examine, sustained or repeated spikes coincide
with, and often precede, declines in validation accuracy. We
therefore treat overlap as a geometric correlate of unstable phases, motivating updates that preserve the selected pretrained mappings by remaining in their bilateral orthogonal complement.

We instantiate this hypothesis in \textbf{GCPO} (\textbf{G}eometrically \textbf{C}onstrained \textbf{P}olicy \textbf{O}ptimization), which augments rollout-based policy optimization with a hard geometric constraint on the effective policy update. At each rollout iteration, GCPO optimizes the underlying policy objective while requiring every adapted layer update to lie in the bilateral orthogonal complement of the selected pretrained principal subspaces. GCPO thus changes the admissible directions of policy improvement, rather than introducing another policy-level penalty. The constraint is complementary to KL regularization: KL controls policy change in output space, whereas GCPO controls the feasible directions of parameter change. It also has a precise layer-level guarantee: for inputs in a selected principal input subspace, the adapted layer's response is unchanged. The remaining complement is large, retaining a high-dimensional feasible complement for task adaptation.

Comprehensive evaluations on Qwen3-8B and GLM4-9B cover mathematical reasoning, code generation, and tool use. Across all tasks, GCPO achieves the best accuracy, outperforming the strongest baseline by 1.02--2.37 points (±std over 3 seeds) and the corresponding instruction-tuned model by 7.09--27.69 points. When trained on mathematics and evaluated on other tasks, it attains the strongest worst-case retention. It also shows smoother accuracy and policy-entropy trajectories and suppresses response-length inflation.

In summary, our main contributions are threefold:
\begin{itemize}
    \item We introduce a dimension-corrected, stepwise measure of principal-subspace overlap and show that elevated overlap is a warning signal associated with declining validation performance in rollout RL.
    \item We formulate rollout RL as a constrained policy optimization problem and propose GCPO to enforce bilateral orthogonality on every effective policy update, which preserves the principal pretrained mappings while retaining a large complementary update space.
    \item Across two model families and three task domains, GCPO outperforms all evaluated baselines, better preserves cross-task capabilities, and exhibits more stable accuracy, policy-entropy, and response-length dynamics.
\end{itemize}

\section{Related Work}

\paragraph{Rollout-based RL for LLM Alignment.}
% Focus on PPO, GRPO, SDPO, and their known stability issues.
Rollout-based reinforcement learning has emerged as the dominant paradigm for enhancing LLM reasoning capabilities. Building upon PPO~\cite{PPO}, recent advances like GRPO~\cite{GRPO}, GSPO~\cite{GSPO}, GMPO~\cite{GMPO}, and DAPO~\cite{DAPO} have significantly improved advantage estimation and scaling efficiency. Despite these innovations, the dynamic nature of self-generated rollouts frequently drives policies toward optimization instabilities, response-length inflation, and general capability degradation~\cite{Qwen3}. Crucially, existing methods primarily intervene through objectives or observable policy behavior, leaving the geometry of realized parameter updates less explored. This highlights the need to diagnose the roots of these instabilities from a novel scope rather than symptomatic output-space patching.

\paragraph{Regularization and Stability in RL.}
RL post-training is commonly stabilized through objective- or
policy-level controls. KL penalties discourage deviation from a
reference policy, clipping constrains large likelihood-ratio changes, and reward shaping adjusts the scalar optimization signal to reduce undesirable behaviors such as reward hacking or length inflation~\cite{singhal2023long, Length-Inflation}. These techniques are effective, but they remain soft controls: they discourage unstable updates rather than ruling them out. Their effectiveness often depends on carefully tuned penalty coefficients and can be weakened by the high-variance feedback of rollout-based RL~\cite{achiam2017constrained,grontas2025pinet}. These limitations motivate looking beyond objective-level controls and designing a complementary approach that constrains the feasible update space directly.

\paragraph{Geometric Analysis of Policy Optimization.}
Recent studies suggest that the geometry of parameter updates plays a critical role in RL-based post-training.  Rather than treating policy updates as unstructured perturbations, these studies show that RL-induced updates exhibit systematic geometric patterns relative to the pretrained weights. In particular, recent analyses find that successful RL updates are, on average, more concentrated outside the dominant singular subspaces of the pretrained operators~\cite{GeometryOPD2026, Foresee2026}. This off-principal tendency suggests that effective policy adaptation may preferentially exploit directions that interfere less with the dominant structures learned during pre-training, which provides a foundation for understanding the importance of parameter-space geometry in rollout-based RL. However, existing analyses mainly characterize aggregate update behavior, leaving open how stepwise deviations from this geometry emerge during training and whether they are related to optimization instability.

\section{Principal-Subspace Overlap and Instability}
\label{sec: diagnose}

\begin{figure*}[t]
    \centering
    \includegraphics[width=1\linewidth]{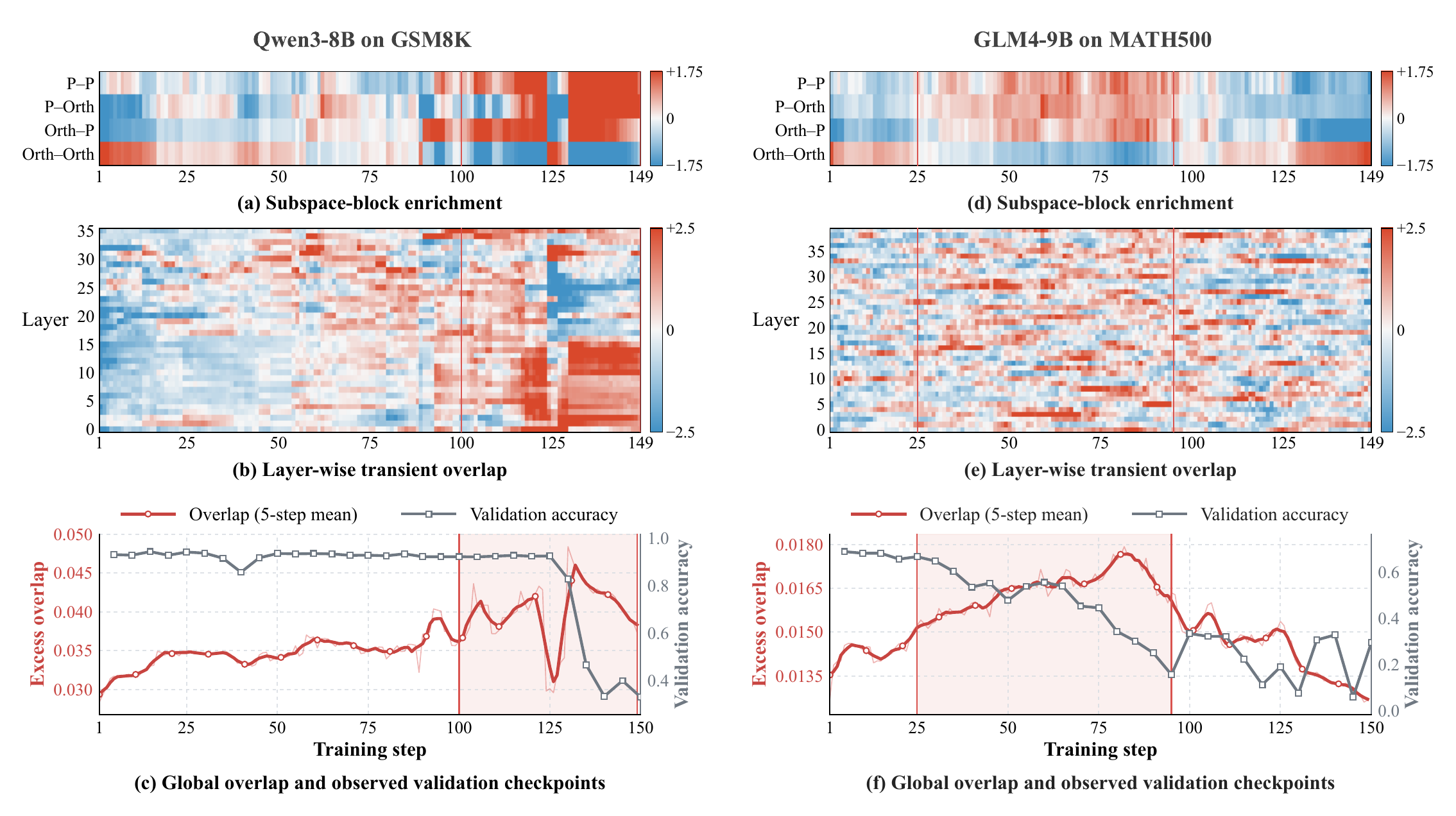}
    \caption{Stepwise update overlap and validation performance. The red curve is the 5-step moving average of excess principal-subspace overlap; the blue curve is validation accuracy. In both runs, episodes of elevated overlap accompany subsequent gradual degradation. This observation is correlational.
}
    \label{fig:intrusion_accuracy_coupling}
\end{figure*}

Prior work finds that RL updates, when aggregated over training, predominantly lie outside the dominant singular subspaces of pretrained weights~\cite{GeometryOPD2026, Foresee2026}. Nevertheless, aggregate
statistics may obscure intermittent spikes, termed \emph{excess principal-subspace overlap}, which often precede validation degradation. Controlled interventions further show that amplifying this overlap causes dose-dependent accuracy drops, motivating complementary-subspace constraints.

\subsection{Preliminaries}
For a pretrained weight matrix $W_{ref} \in \mathbb{R}^{d_{out} \times d_{in}}$, let $W_{ref}=\Phi \Sigma \Psi^\top$ be its SVD. We call the spans of the top-$k$ left and right singular vectors, $\Phi_k$ and $\Psi_k$, its \emph{principal subspaces}. They capture high-energy input--output directions of the pretrained linear operator; they are a structural proxy, not a literal partition of semantic knowledge. Let $\Pi_\Phi=\Phi_k\Phi_k^\top, \Pi_\Psi=\Psi_k\Psi_k^\top$ be the associated projectors, and $\Pi_\Phi^\perp=I-\Pi_\Phi, \Pi_\Psi^\perp=I-\Pi_\Psi$ denote the projectors onto their orthogonal complements.

We analyze the realized update $\delta^{(t)} W=W_t-W_{t-1}$, rather than the raw gradient, so the measurement includes the optimizer, learning rate, and momentum. Its energy can be decomposed into four mutually orthogonal blocks:
\begin{equation}
\begin{aligned}
    &\delta^{(t)} W = \underbrace{\Pi_\Phi \delta^{(t)}W \Pi_\Psi}_{\delta^{(t)}W^{PP}} + \underbrace{\Pi_\Phi \delta^{(t)}W \Pi_\Psi^\perp}_{\delta^{(t)} W^{PO}} \\
    & + \underbrace{\Pi_\Phi^\perp \delta^{(t)} W \Pi_\Psi}_{\delta^{(t)} W^{OP}} + \underbrace{\Pi_\Phi^\perp \delta^{(t)} W \Pi_\Psi^\perp}_{\delta^{(t)} W^{OO}}.
\end{aligned}
\end{equation}
Let $E_{ij}=\|\delta^{(t)} W^{ij}\|_F^2$ and $E_{\mathrm{total}}=\|\delta^{(t)} W\|_F^2$. The $OO$ block is orthogonal to the principal subspace on both sides; the other three blocks have overlap on at least one side. We summarize this overlap by

\begin{equation}
    O_t = \frac{E_{PP}+E_{PO}+E_{OP}}{E_{\mathrm{total}}}
        = 1-\frac{E_{OO}}{E_{\mathrm{total}}}.
\end{equation}

\subsection{Observed Excess Principal-subspace Overlap}
\label{sec:diagnose}

This raw ratio depends partly on dimensionality: an isotropic update has nonzero overlap simply because the principal subspaces have dimension $k$. For a $d_{\mathrm{out}} \times d_{\mathrm{in}}$ matrix, its expected overlap is
\begin{equation}
    O_{\mathrm{null}}
    =
    1-
    \frac{
        (d_{\mathrm{out}}-k)
        (d_{\mathrm{in}}-k)
    }{
        d_{\mathrm{out}}d_{\mathrm{in}}
    }.
\end{equation}
We therefore report the dimension-corrected quantity
\begin{equation}
    O_t^{\mathrm{excess}}=O_t-O_{\mathrm{null}}.
\end{equation}
Positive $O_t^{\mathrm{excess}}$ means that an update is more aligned with the principal subspaces than an isotropic update of the same size. We refer to this quantity as \emph{excess principal-subspace overlap}. It is an observable alignment statistic, not a claim that every principal direction is harmful.

Figure~\ref{fig:intrusion_accuracy_coupling} shows two consistent findings. First, most update energy remains in $OO$ (93.8\% for Qwen3-8B and 97.7\% for GLM4-9B on average), agreeing with the aggregate off-principal trend. Second, this average masks intermittent positive spikes in $O_t^{\mathrm{excess}}$. In the displayed runs, sustained or repeated spikes coincide with, and precede, declining validation accuracy; the effect appears as gradual deterioration in one setting and a sharp drop in another. Layer-wise measurements further show that these spikes are concentrated rather than uniform, often in intermediate and upper layers. More results are provided in Appendix~\ref{appendix:additional_overlap}.

\paragraph{Controlled intervention.}
To move beyond correlation, we intervene on a GRPO update of Qwen3-8B on ToolAlpaca. We rescale its principal-overlapping component by $\eta$ while preserving the Frobenius norm of every layer-wise update. Details are provided in Appendix~\ref{appendix:experimental_setup}. As shown in Figure~\ref{fig:controlled_intervention}, increasing overlap produces a
clear dose-dependent accuracy drop. Relative to the original update
($\eta=1$, 56.89\%), removing the overlapping component improves
accuracy, whereas principal-subspace injection substantially reduces accuracy. A matched random-subspace intervention causes substantially less degradation. These results provide local intervention evidence that increasing principal-subspace overlap can directly harm model performance, beyond the effects of update magnitude or arbitrary directional perturbation. Additional interventions across models, tasks, and checkpoints show the same dose-dependent trend (Appendix~\ref{appendix:intervention}).

\begin{figure}[t]
    \centering
    \includegraphics[width=\linewidth]{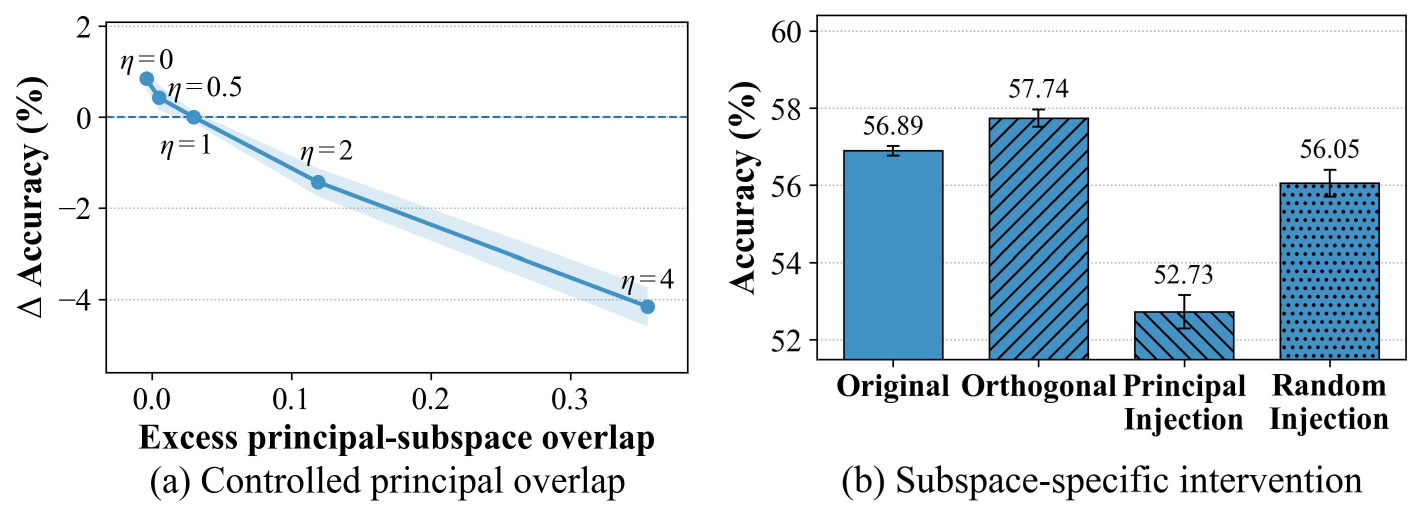}
    \caption{
    Controlled intervention on the step-150 GRPO update of Qwen3-8B on
    ToolAlpaca. (a) Increasing the principal-overlapping component under
    layer-wise norm matching produces a dose-dependent accuracy drop.
    (b) Orthogonalization improves accuracy, whereas matched
    principal-subspace injection is substantially more harmful than a
    random-subspace intervention.
    }
    \label{fig:controlled_intervention}
\end{figure}

Together, the observational and intervention results motivate preserving the dominant input and output directions while optimizing in their bilateral orthogonal complement. We implement this restriction next.

\section{Methodology}

Our diagnostic suggests a simple intervention: rather than modifying the rollout objective, restrict each policy update to the bilateral orthogonal complement of the dominant singular subspaces of the pretrained weights. GCPO realizes this intervention within GRPO through a projected low-rank parameterization.

\subsection{Problem Formulation}
\label{sec: problem formulation}

Let $\theta_0$ be the pretrained policy and $\mathcal{M}$ the set of adapted linear layers. For each $\ell\in\mathcal{M}$, we compute once the rank-$k$ singular subspaces $\Phi_k^{(\ell)}$ and $\Psi_k^{(\ell)}$
of $W_{\mathrm{ref}}^{(\ell)}$. We seek a policy update that maximizes the usual rollout objective while having zero principal-subspace overlap at every adapted layer.

\begin{align}
    \max_{\Delta \theta} \quad & \mathcal{J}_{\text{rollout}}(\pi_{\theta_0 + \Delta \theta}) - \beta \mathbb{D}_{\text{KL}}(\pi_{\theta_0 + \Delta \theta} \,||\, \pi_{\text{ref}}), \label{eq: grpo_objective} \\
    \text{s.t.} \quad & \Phi_k^{(\ell)\top} \delta^{(t)} W^{(\ell)}=0, \label{eq: constraint_1} \\
    & \delta^{(t)} W^{(\ell)}\Psi_k^{(\ell)}=0, \quad \forall \ell \in \mathcal{M}, \label{eq: constraint_2}
\end{align}
The two constraints are equivalent to retaining only the $OO$ block of each update. They complement, rather than replace, KL regularization: KL controls policy change in output space, whereas the constraints control where the parameter update can lie. A penalty could encourage these constraints, but would require trading them against the rollout objective. We instead enforce them exactly by parameterization.

\subsection{GCPO: Geometrically Constrained Policy Optimization}
\label{sec: main method}

For each adapted layer, with fixed orthogonal-complement projectors $\Pi_\Phi^\perp, \Pi_\Psi^\perp$, we can parameterize its update as
\begin{equation}
   \delta^{(t)} W^{(\ell)} = \alpha \Pi_\Phi^\perp L^{(\ell)} R^{(\ell)} \Pi_\Psi^\perp,
\end{equation}
where $L^{(\ell)}$ and $R^{(\ell)}$ are trainable low-rank factors and $\alpha$ is the standard scaling constant. The right projector removes components acting on dominant input directions; the left projector removes components written into dominant output directions.

This parameterization satisfies both constraints for every value of the trainable factors in Eq. (\ref{eq: constraint_1}) and Eq. (\ref{eq: constraint_2}). Thus, even if rollout gradients contain principal-subspace components, those components cannot change the effective layer update. This is the central property of GCPO: it converts the observed preference for off-principal updates into an exact feasible set, while leaving optimization within that set unchanged.

\subsection{Theoretical Properties and Comparison}
\label{sec: Theoretical analysis}

\paragraph{Exact subspace preservation.}
GCPO satisfies the bilateral constraints at every adapted layer and every optimization step. Consequently, the pretrained layer mapping is unchanged for inputs in the selected principal input subspace, with an analogous guarantee for the principal output subspace. Formal proofs are provided in Appendix~\ref{appendix:proofs}.

\paragraph{Capacity and cost.}
The feasible update space has dimension $(d_{\mathrm{out}}-k)(d_{\mathrm{in}}-k)$, and every feasible rank-$r$ update can be represented by the projected low-rank parameterization. Thus, for $k$ small relative to the layer width, GCPO retains a large space for task adaptation. Formal proofs are provided in Appendix~\ref{appendix:proofs}.

\paragraph{Distinction from Low-Rank Adaptation.}
Although \Method uses low-rank factorization for efficient
implementation, it solves a different constrained optimization
problem from conventional low-rank adaptation. LoRA primarily
restricts update rank for parameter efficiency, while
some geometric variants shape update directions to
prevent catastrophic forgetting typically under
fixed-data supervised fine-tuning~\cite{GeoLoRA, MiLoRA2025,
TailLoR2026, wang2023orthogonal}. In contrast, \Method is
motivated by transient \emph{principal-subspace overlap} under
on-policy rollout feedback in RL, where each policy update changes the
subsequent training distribution. It therefore constrains update
direction through bilateral orthogonality to stabilize stepwise
RL dynamics and preserve pretrained mappings, while low rank serves mainly
as an efficient parameterization of this feasible space.

\section{Experiments}
\label{sec:experiments}

In this section, we evaluate GCPO in terms of general-task performance, cross-task capability retention, training stability and efficiency, and design effectiveness through ablation and structural analyses.

% \subsection{Experimental Setup}
\label{sec:experimental_setup}

\paragraph{Models and Tasks.} We adopt two widely used instruction-tuned LLMs as our backbone models: Qwen3-8B \cite{Qwen3} and GLM4-9B \cite{GLM4}, and
evaluate them across three representative domains using established benchmarks and consistent protocols across methods:

\begin{itemize}
    \item \textbf{Mathematical Reasoning:} Evaluated on the MATH500 \cite{MATH500}. The reward is computed via exact matching of the final boxed answer using rule-based parsing.
    \item \textbf{Code Generation:} Evaluated on HumanEval+ \cite{HumanEval+} for Python function synthesis. The reward is granted based on the execution pass rate across unit tests.
    \item \textbf{Tool Use:} Evaluated on ToolAlpaca~\cite{ToolAlpaca}, which requires generating the appropriate API call given a tool specification and a user request. The reward is based on exact function-name matching, argument-key consistency, and normalized argument-value matching.
\end{itemize}

\paragraph{Baselines.}
We extensively compare GCPO against a comprehensive suite of optimization strategies. These include GRPO~\cite{GRPO}, objective-level variants including GSPO~\cite{GSPO}, DAPO~\cite{DAPO}, and GMPO~\cite{GMPO}, which modify objectives or clip probability ratios. To compare with parameter-space constraints, we include GRPO-LoRA~\cite{LoRA}, which uses unconstrained low-rank adaptation without the hard bilateral orthogonality enforced by GCPO. To isolate the effect of the directional constraint from low-rank parameterization, GRPO-LoRA and GCPO use the same adaptation rank and scaling configuration.

\paragraph{Implementation Details.}

For all GRPO-based methods, we sample $K=16$ rollouts per prompt during training. We adopt the same hyperparameters to ensure equity. For GCPO, we precompute the top-$k=8$ singular subspaces and constrain updates to their bilateral orthogonal complements. For evaluation, we report bootstrap-estimated majority@16 accuracy from 16 test-time responses per example. More details on seeds, data splits, evaluation, and hyperparameters are provided in Appendix~\ref{appendix:experimental_setup}.

\subsection{Main Results}
\label{sec:general_performance}

Table~\ref{table: main results of grpo} reports the mean and standard deviation over three independent training seeds across three task domains and two backbone models. GCPO achieves the best accuracy in all six model--task settings. Relative to the corresponding instruction-tuned base models, it improves accuracy by $7.09$--$27.69$ percentage points, and surpasses the strongest competing method on each benchmark by $1.02$--$2.37$ points. Moreover, GCPO exhibits the lowest standard deviation in all six settings, indicating reduced sensitivity to training stochasticity. The margins are particularly pronounced on GLM4-9B, where GCPO outperforms the strongest baselines by $2.15$--$2.37$ points, demonstrating consistent effectiveness across backbones with different optimization behaviors.

Beyond individual benchmarks, GCPO also achieves the strongest average performance. On Qwen3-8B, it obtains an average accuracy of $78.63$, exceeding the strongest baseline average of $77.48$ by $1.15$ points. On GLM4-9B, it improves the strongest baseline average from $73.73$ to
$76.12$. Averaged across all six model--task settings, GCPO reaches $77.38$, compared with $75.61$ for the strongest baseline, which suggests effectiveness and generalizability.

Crucially, we emphasize that these gains are not merely artifacts of low-rank parameter reduction: as shown in Table~\ref{table: main results of grpo}, GRPO-LoRA with the same rank achieves consistently lower accuracy than GCPO across all evaluations. This confirms that the explicit directional orthogonal projection is the primary driver of our method's superiority.

\begin{table}[h]
\centering
\fontsize{9}{11}\selectfont
\setlength{\tabcolsep}{3.5pt}
\begin{tabular}{@{}lccc@{}}
  \toprule[1.2pt]
    \textbf{Method}
    & \textbf{MATH500}
    & \textbf{HumanEval+}
    & \textbf{ToolAlpaca} \\
  \midrule[1.2pt]

  \multicolumn{4}{c}{\textit{Qwen3-8B}} \\
  \midrule
    Base (Instruct)
    & 67.46
    & 73.58
    & 56.53 \\

    GRPO
    & 72.00 {\scriptsize $\pm 1.36$}
    & 84.24 {\scriptsize $\pm 0.88$}
    & 59.56 {\scriptsize $\pm 1.52$} \\

    GSPO
    & 77.80 {\scriptsize $\pm 0.74$}
    & 87.81 {\scriptsize $\pm 0.51$}
    & \underline{66.18} {\scriptsize $\pm 0.82$} \\

    DAPO
    & \underline{78.33} {\scriptsize $\pm 0.61$}
    & 88.13 {\scriptsize $\pm 0.48$}
    & 65.99 {\scriptsize $\pm 0.76$} \\

    GMPO
    & 77.64 {\scriptsize $\pm 0.69$}
    & \underline{88.14} {\scriptsize $\pm 0.45$}
    & 66.16 {\scriptsize $\pm 0.71$} \\

    GRPO-LoRA
    & 77.87 {\scriptsize $\pm 0.58$}
    & 87.36 {\scriptsize $\pm 0.62$}
    & 66.05 {\scriptsize $\pm 0.67$} \\

    \rowcolor{gray!20}
    GCPO
    & \textbf{79.47} {\scriptsize $\pm 0.31$}
    & \textbf{89.16} {\scriptsize $\pm 0.27$}
    & \textbf{67.26} {\scriptsize $\pm 0.39$} \\

  \midrule
  \multicolumn{4}{c}{\textit{GLM4-9B}} \\
  \midrule
    Base (Instruct)
    & 66.51
    & 76.55
    & 42.47\\

    GRPO
    & 59.43 {\scriptsize $\pm 1.84$}
    & 72.43 {\scriptsize $\pm 1.42$}
    & 65.70 {\scriptsize $\pm 1.65$} \\

    GSPO
    & 71.29 {\scriptsize $\pm 0.97$}
    & 79.55 {\scriptsize $\pm 0.83$}
    & 66.22 {\scriptsize $\pm 1.08$} \\

    DAPO
    & \underline{72.41} {\scriptsize $\pm 0.82$}
    & 81.43 {\scriptsize $\pm 0.71$}
    & 67.35 {\scriptsize $\pm 0.93$} \\

    GMPO
    & 71.33 {\scriptsize $\pm 0.91$}
    & 80.63 {\scriptsize $\pm 0.76$}
    & \underline{67.79} {\scriptsize $\pm 0.88$} \\

    GRPO-LoRA
    & 72.34 {\scriptsize $\pm 0.66$}
    & \underline{81.48} {\scriptsize $\pm 0.59$}
    & 66.19 {\scriptsize $\pm 0.75$} \\

    \rowcolor{gray!20}
    GCPO
    & \textbf{74.56} {\scriptsize $\pm 0.34$}
    & \textbf{83.64} {\scriptsize $\pm 0.29$}
    & \textbf{70.16} {\scriptsize $\pm 0.41$} \\

  \bottomrule[1.2pt]
\end{tabular}
\caption{Main results across tasks. We report bootstrap-estimated majority@16 accuracy as mean $\pm$ standard deviation over three training seeds, in percentage points.}
\label{table: main results of grpo}
\end{table}

\subsection{Cross-Task Capability Preservation}
\label{sec:capability_retention}

Single-domain RL post-training may improve the target task while degrading other capabilities. To examine whether our geometric constraint better preserves such general capabilities, we post-train models on MATH500 and evaluate their cross-task capabilities on code generation (HumanEval+) and tool-use (ToolAlpaca) benchmarks.

\paragraph{Existing RL variants poorly retain tool-use capability.}
As shown in Table~\ref{tab:cross_task_retention}, GRPO and its variants generally exhibit substantial degradation on ToolAlpaca after MATH500 post-training. This drop is most severe for GRPO, with a large decline of $-14.97$ on GLM4-9B. These results suggest that conventional RL strategies are insufficient to preserve general capabilities beyond the reward-optimized domain.

\paragraph{Mathematical reasoning can transfer to code generation.} 
In contrast, RL training on math tasks has a milder effect on coding capabilities. Several variants even improve code generation performance, indicating positive transfer from mathematical reasoning to coding, likely because both tasks rely on step-by-step reasoning and symbolic problem solving. 

\paragraph{GCPO best preserves general capabilities.}
GCPO obtains the best Worst $\Delta$ scores on both Qwen3-8B and GLM4-9B, with gains of $+1.03$ and $+0.91$, while also improving HumanEval+ by $+3.99$ and $+5.88$, respectively. 
These results indicate that constraining updates away from dominant pretrained subspaces better balances task-specific RL adaptation with the preservation of general capabilities.
\begin{table}[t]
    \centering
    \fontsize{9}{11}\selectfont
    \begin{tabular}{lccc}
        \toprule[1.2pt]
        \textbf{Method}
        & \textbf{HumanEval+}
        & \textbf{ToolAlpaca}
        & \textbf{Worst $\Delta$} $\uparrow$ \\
        \midrule[1.2pt]

        \multicolumn{4}{c}{\textit{Qwen3-8B}} \\
        \midrule
        Base (Instruct)
        & 73.58
        & 56.53
        & 0.00 \\

        GRPO
        & 64.71 {\scriptsize $(-8.87)$}
        & 50.72 {\scriptsize $(-5.81)$}
        & $-8.87$ \\

        GSPO
        & \underline{76.82} {\scriptsize $(+3.24)$}
        & 53.41 {\scriptsize $(-3.12)$}
        & $-3.12$ \\

        DAPO
        & 75.54 {\scriptsize $(+1.96)$}
        & 53.09 {\scriptsize $(-3.44)$}
        & $-3.44$ \\

        GMPO
        & 74.31 {\scriptsize $(+0.73)$}
        & 52.90 {\scriptsize $(-3.63)$}
        & $-3.63$ \\

        GRPO-LoRA
        & 74.63 {\scriptsize $(+1.05)$}
        & \underline{56.69} {\scriptsize $(+0.16)$}
        & \underline{$+0.16$} \\

        \rowcolor{gray!20}
        GCPO
        & \textbf{77.57} {\scriptsize $(+3.99)$}
        & \textbf{57.56} {\scriptsize $(+1.03)$}
        & \textbf{$+1.03$} \\

        \midrule
        \multicolumn{4}{c}{\textit{GLM4-9B}} \\
        \midrule
        Base (Instruct)
        & 76.55
        & 42.47
        & 0.00 \\

        GRPO
        & 76.91 {\scriptsize $(+0.36)$}
        & 27.50 {\scriptsize $(-14.97)$}
        & $-14.97$ \\

        GSPO
        & \underline{81.58} {\scriptsize $(+5.03)$}
        & 41.29 {\scriptsize $(-1.18)$}
        & $-1.18$ \\

        DAPO
        & 81.39 {\scriptsize $(+4.84)$}
        & 37.68 {\scriptsize $(-4.79)$}
        & $-4.79$ \\

        GMPO
        & 79.76 {\scriptsize $(+3.21)$}
        & 41.33 {\scriptsize $(-1.14)$}
        & $-1.14$ \\

        GRPO-LoRA
        & 81.00 {\scriptsize $(+4.45)$}
        & \underline{43.14} {\scriptsize $(+0.67)$}
        & \underline{$+0.67$} \\

        \rowcolor{gray!20}
        GCPO
        & \textbf{82.43} {\scriptsize $(+5.88)$}
        & \textbf{43.38} {\scriptsize $(+0.91)$}
        & \textbf{$+0.91$} \\

        \bottomrule[1.2pt]
    \end{tabular}
    \caption{
        Cross-task capability retention after training on MATH500 and evaluated on  HumanEval+ and ToolAlpaca benchmarks. Values in parentheses denote absolute accuracy changes relative to the base model. Worst $\Delta$ denotes the smaller change across the two evaluation tasks.
    }
    \label{tab:cross_task_retention}
\end{table}

\subsection{Optimization Dynamics of GCPO}
\label{sec:stability_efficiency}

We investigate the underlying optimization dynamics of GCPO, explicitly focusing on its ability to stabilize the training trajectory, suppress shortcut learning behaviors, and maintain hardware efficiency.

\paragraph{Robust Training Stability.} 
Figure~\ref{fig:stability of Qwen tooluse} illustrates the accuracy trajectories evaluated every 5 steps during training on the ToolAlpaca for Qwen3-8B. GRPO exhibits severe performance oscillations and high-variance fluctuations, whereas GCPO yields a smooth, continuously increasing accuracy curve. This is consistent with our finding that our bilateral orthogonal constraint effectively acts as a structural barrier, preventing variance-heavy gradients from intruding into the principal subspace and ensuring stable, directional policy improvement.

\begin{figure}[t]
    \centering
    \includegraphics[width=1\linewidth]{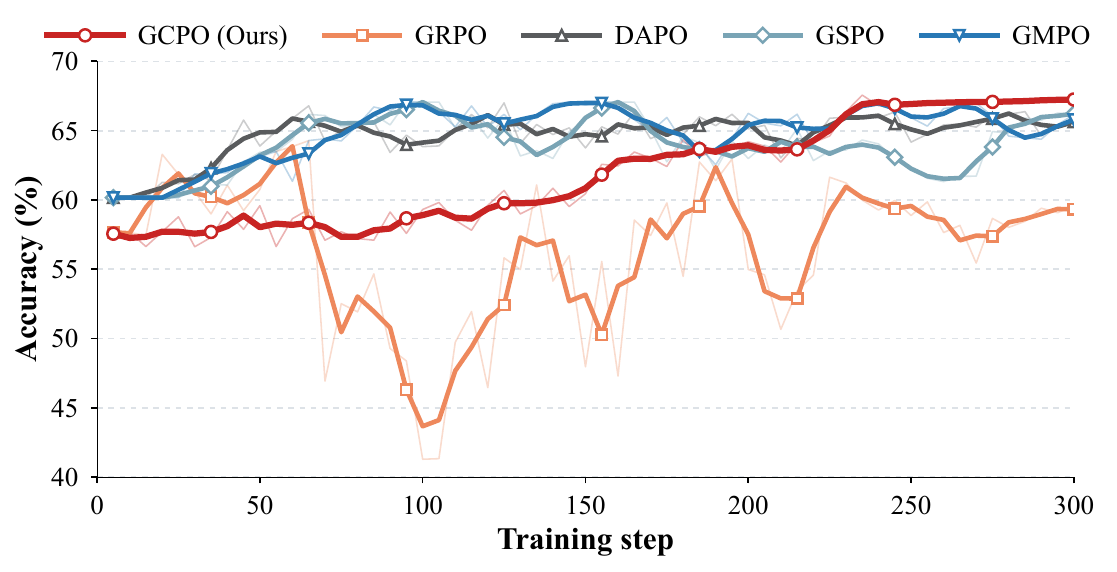}
    \caption{Training stability and accuracy trajectory on ToolAlpaca with Qwen3-8B, evaluated every 5 training steps.}
    \label{fig:stability of Qwen tooluse}
\end{figure}

\paragraph{Stable Policy Entropy Dynamics.}
We further examine the policy entropy during RL post-training to characterize the exploration--exploitation behavior of different methods. A desirable optimization trajectory should gradually reduce entropy, while avoiding both entropy oscillations and premature entropy collapse\cite{entropy,entropy2}. As shown in Figure~\ref{fig:policy_entropy_qwen}, GRPO exhibits large entropy oscillations, while most baselines exhibit rapid entropy decay, suggesting premature overconfident exploitation and reduced policy diversity. GCPO instead maintains a smooth, gradual decay, suggesting more controlled policy specialization without rapid loss of diversity.

\begin{figure}[t]
    \centering
    \includegraphics[width=1\linewidth]{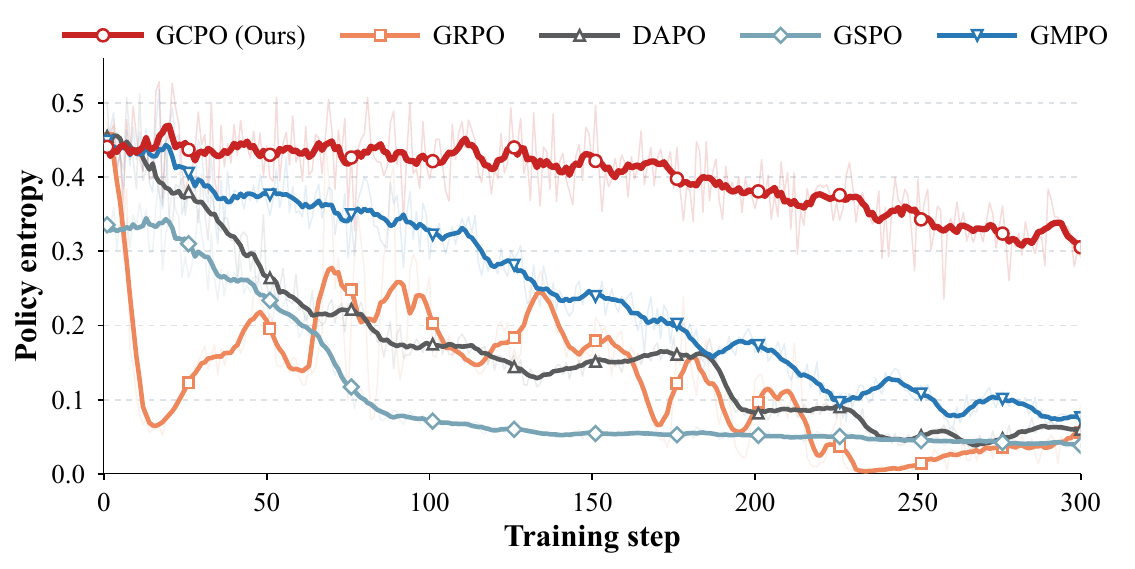}
    \caption{Policy entropy dynamics on MATH500 with GLM4-9B. GCPO maintains a smooth and gradual entropy decay, avoiding both the severe oscillations of GRPO and the premature entropy collapse in other baselines.}
    \label{fig:policy_entropy_qwen}
\end{figure}

\paragraph{Resistance to Response-Length Inflation.}
A notorious failure mode in rollout-based RL is \emph{response-length inflation}, where the policy learns a shortcut to hack rewards by generating excessively long, redundant tokens rather than improving genuine reasoning. As shown in Figure~\ref{fig:response_length_qwen_math}, GRPO suffers from aggressive and uncontrolled output length expansion on MATH500. Conversely, GCPO substantially reduces response length, maintaining concise and stable generation lengths. We hypothesize that shielding dominant pretrained directions—including those encoding length priors—from high-variance gradients reduces the policy's tendency to exploit verbosity as a reward-hacking shortcut.

\begin{figure}[t]
    \centering
    \includegraphics[width=1\linewidth]{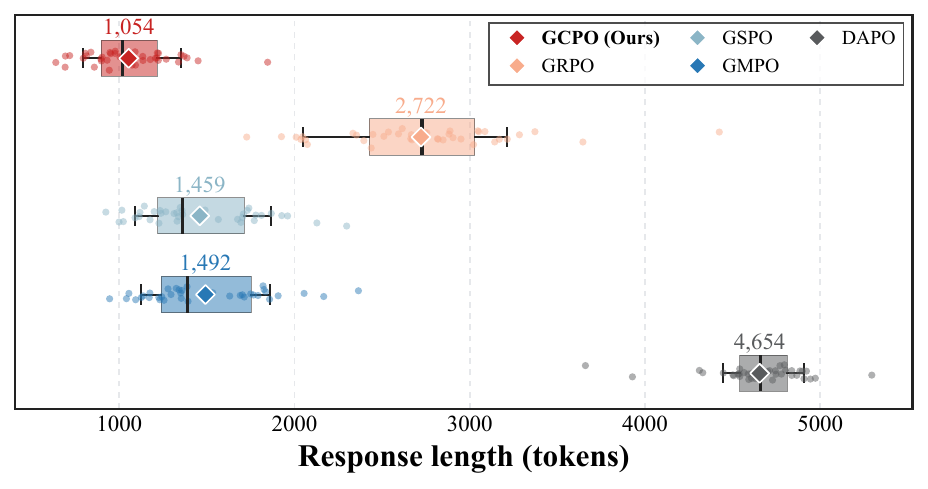}
    \caption{Response length on MATH500 with Qwen3-8B. GCPO most effectively mitigates response-length inflation.}
    \label{fig:response_length_qwen_math}
\end{figure}

\paragraph{Superior Memory Efficiency.}
Figure~\ref{fig:vram_profile_qwen_math} compares peak GPU memory under matched GRPO training configurations. Full-parameter GRPO requires substantially more memory because optimizer states and gradients are maintained for all adapted dense weights. Both GRPO-LoRA and GCPO reduce this cost through low-rank adaptation. Importantly, GCPO achieves comparable peak memory to LoRA, showing that the bilateral geometric constraint does not sacrifice the memory efficiency of parameter-efficient training. The principal singular vectors are frozen and require no optimizer states, while the learned update can be merged into the pretrained weights for inference.

\begin{figure}[t]
    \centering
    \includegraphics[width=1\linewidth]{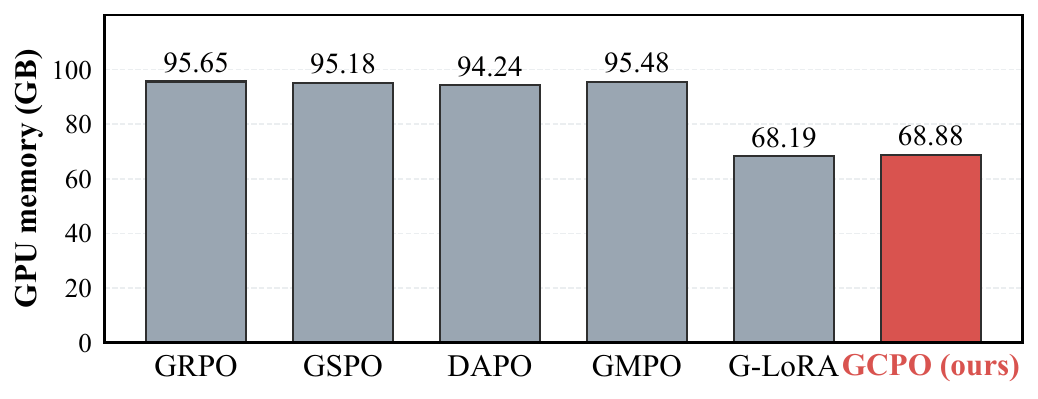}
    \caption{Peak per-GPU training memory on MATH500 with GLM4-9B, where G-LoRA abbreviates GRPO-LoRA.}
    \label{fig:vram_profile_qwen_math}
\end{figure}

\subsection{Ablation Studies}
\label{sec:ablation_analysis}

\paragraph{Orthogonal Constraints Prevent Capability Collapse.}
Table~\ref{tab:tooluse_geometric_ablation} reveals that the target subspace strongly affects optimization performance. Forcing updates into the \emph{Principal} subspace triggers a severe performance collapse, indicating that altering dominant singular directions disrupts task-relevant capabilities.
Conversely, the \emph{Orthogonal} complement achieves peak accuracy, substantially outperforming \emph{Random} projections. By structurally routing updates away from the core manifold, it safely resolves the conflict between aggressive task adaptation and knowledge retention.

\paragraph{Bilateral Projections Seal Collapse Pathways.}
Unilateral constraints on solely the \emph{left} (output) or \emph{right} (input) singular spaces yield only marginal gains over the unconstrained baseline. Geometrically, securing just one side leaves the opposite subspace vulnerable to dominant task gradients, permitting partial overlap. The \emph{bilateral} projection is mathematically necessary to seal both pathways simultaneously, ensuring zero principal-subspace overlap and optimal accuracy.

\paragraph{Hard Constraints Survive Iterative Rollout Dynamics.}
Both KL divergence and explicit soft orthogonality penalties ($\|\Phi_k^\top \delta W\|_F^2 + \|\delta W \Psi_k\|_F^2$) underperform our approach. Under the iterative feedback loops of RL, dominant task gradients inevitably overwhelm soft loss terms, allowing overlap to persist. By enforcing orthogonality by construction, our \emph{Hard} constraint robustly shields the pretrained manifold.

\begin{table}[t]
    \centering
    \small
    \begin{tabular}{llc}
        \toprule[1.2pt]
        \textbf{Ablation}
        & \textbf{Variant}
        & \textbf{MATH500 Acc.} $\uparrow$ \\
        \midrule[1.2pt]

        \multirow{4}{*}{Projection}
        & w/o. Constraint & 72.34 \\
        & Left-only               & 73.49 \\
        & Right-only              & 73.56 \\
        & \cellcolor{gray!15}\textbf{Bilateral (Ours)}
        & \cellcolor{gray!15}\textbf{74.56} \\
        \midrule

        \multirow{3}{*}{Subspace}
        & Random              & 66.47 \\
        & Principal           & 62.59 \\
        & \cellcolor{gray!15}\textbf{Orthogonal (Ours)}
        & \cellcolor{gray!15}\textbf{74.56} \\
        \midrule

        \multirow{3}{*}{Constraints}
        & Soft Loss Regularization & 71.11 \\
        & KL Regularization     & 67.83 \\
        & \cellcolor{gray!15}\textbf{Hard (Ours)}
        & \cellcolor{gray!15}\textbf{74.56} \\
        \bottomrule[1.2pt]
    \end{tabular}
    \caption{
        Geometric design ablations on MATH500 benchmark with GLM4-9B, considering different projection directions, subspace selections, and constraint mechanisms.
    }
    \label{tab:tooluse_geometric_ablation}
\end{table}

\paragraph{The Protected Dimension Balances Protection and Adaptation.}
Performance varies substantially with $k$ and peaks at $k=8$ on
both tasks (Figure~\ref{fig:k_ablation}). Smaller $k$ provides insufficient protection, whereas larger $k$ overly restricts the feasible update space, highlighting the importance of $k$ in balancing subspace protection and adaptation capacity.

\begin{figure}[t]
    \centering
    \includegraphics[width=1\linewidth]{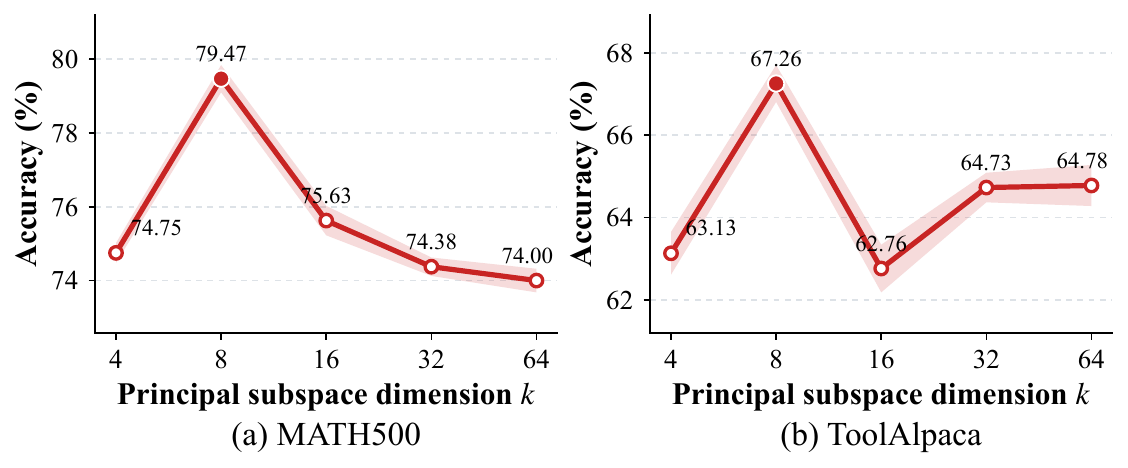}
    \caption{Effect of $k$ on MATH500 and ToolAlpaca with Qwen3-8B, reported as mean $\pm$ standard deviation.}
    \label{fig:k_ablation}
\end{figure}

\subsection{Limitations}
Our study focuses on the parameter-space dynamics of on-policy RL, and it remains unclear whether the observed geometric patterns also extend to other post-training paradigms, such as DPO~\cite{DPO}, KTO~\cite{KTO}, and OPD~\cite{OPD}. Although principal-subspace overlap is consistently associated with performance degradation, its causal relationship with model instability, rollout behaviors such as length inflation and reward hacking is not yet fully understood. Moreover, future work should examine broader alignment objectives, model scales, and training settings, while adaptive, layer-wise selection of $k$, to assess the generality of this principal-subspace overlap and develop more stable and capability-preserving post-training.

\section{Conclusion}
We study rollout-based RL instability through the geometry of step-wise parameter updates. We identify transient principal-subspace overlap as a warning signal associated with performance degradation, with controlled interventions providing local evidence of its harmful effect. Based on this diagnosis, we propose GCPO, which restricts effective updates to the bilateral orthogonal complements of pretrained principal subspaces. Across
two backbones and three task domains, GCPO achieves the highest mean performance overall, while improving capability retention, response-length control, and policy-entropy stability. These results highlight parameter-space geometry as a useful diagnostic and design perspective for stable LLM reinforcement learning.

\section*{Acknowledgement}
We gratefully acknowledge NovaCore for the valuable guidance and resources that supported this research. Additionally, We thank the open-source community for releasing code and datasets that made this research possible.

\bibliography{aaai2027}

% Check whether the conference requires a reproducibility checklist to be included in the paper.
% If so, you can uncomment the following line and ajust the path to include it.

% \newpage
% \input{ReproducibilityChecklist.tex}

% \iffalse

\clearpage

\appendix

\setcounter{secnumdepth}{1}

\section{Notations}
\label{appendix:notations}

Table~\ref{tab:notations} summarizes the main notation used in
our geometric analysis and method. The superscript $\ell$ denotes
a linear layer, and $t$ denotes an optimization step. We omit the
layer index when it is clear from context.

\begin{table*}[t]
    \centering
    \small
    \setlength{\tabcolsep}{6pt}
    \caption{Summary of the main notation.}
    \label{tab:notations}
    \begin{tabular}{@{}p{0.25\textwidth}p{0.69\textwidth}@{}}
        \toprule
        \textbf{Notation}
        & \textbf{Definition} \\
        \midrule

        $\theta_0,\theta_t$
        & Initial policy parameters and policy parameters after
        optimization step $t$. \\

        $\pi_{\theta_t},\pi_{\mathrm{ref}}$
        & Current policy and fixed reference policy. \\

        $\mathcal{M}$
        & Set of linear layers adapted during RL post-training. \\

        $W_{\mathrm{ref}}^{(\ell)},W_t^{(\ell)}$
        & Pretrained and adapted weight matrices of layer $\ell$. \\

        $W_{\mathrm{ref}}^{(\ell)}
        =\Phi^{(\ell)}\Sigma^{(\ell)}
        \Psi^{(\ell)\top}$
        & Singular value decomposition of the pretrained weight. \\

        $\Phi_k^{(\ell)},\Psi_k^{(\ell)}$
        & Top-$k$ left and right singular vectors, defining the
        principal output and input subspaces, respectively. \\

        $\Pi_{\Phi}^{(\ell)},
        \Pi_{\Psi}^{(\ell)}$
        & Projectors onto the principal subspaces:
        $\Pi_{\Phi}^{(\ell)}
        =\Phi_k^{(\ell)}\Phi_k^{(\ell)\top}$ and
        $\Pi_{\Psi}^{(\ell)}
        =\Psi_k^{(\ell)}\Psi_k^{(\ell)\top}$. \\

        $\Pi_{\Phi}^{(\ell)\perp},
        \Pi_{\Psi}^{(\ell)\perp}$
        & Projectors onto the corresponding orthogonal complements. \\

        $\Delta W_t^{(\ell)}
        =W_t^{(\ell)}-W_{\mathrm{ref}}^{(\ell)}$
        & Cumulative adaptation relative to the pretrained weight. \\

        $\delta^{(t)}W^{(\ell)}
        =W_t^{(\ell)}-W_{t-1}^{(\ell)}$
        & Realized update at optimization step $t$. \\

        $\delta^{(t)}W_{ij}^{(\ell)}$
        & Update block obtained by projecting its left and right
        sides onto $i,j\in\{P,O\}$, where $P$ and $O$ denote the
        principal and orthogonal-complement subspaces. \\

        $E_{ij}^{(t,\ell)}$
        & Squared Frobenius energy of block $ij$:
        $\|\delta^{(t)}W_{ij}^{(\ell)}\|_F^2$. \\

        $O_t$
        & Fraction of update energy overlapping at least one
        principal subspace. \\

        $O_{\mathrm{null}}$
        & Expected overlap of an isotropic update under the same
        matrix dimensions and protected rank. \\

        $O_t^{\mathrm{excess}}
        =O_t-O_{\mathrm{null}}$
        & Dimension-corrected principal-subspace overlap. \\

        $L^{(\ell)},R^{(\ell)}$
        & Trainable low-rank factors used by GCPO. \\

        $r,k,s$
        & Adaptation rank, protected principal rank, and adaptation
        scaling coefficient, respectively. \\

        $\|\cdot\|_F$
        & Frobenius norm. \\

        \bottomrule
    \end{tabular}
\end{table*}

\section{Proofs}
\label{appendix:proofs}

We provide proofs for the geometric properties used in the
diagnostic measure and GCPO formulation. For clarity, we omit the
layer index $\ell$ unless necessary.

\subsection{Orthogonality of the Update Decomposition}

\begin{lemma}
The four blocks
$\delta^{(t)}W^{PP}$,
$\delta^{(t)}W^{PO}$,
$\delta^{(t)}W^{OP}$, and
$\delta^{(t)}W^{OO}$
are mutually orthogonal under the Frobenius inner product.
Consequently,
\begin{equation}
    \|\delta^{(t)}W\|_F^2
    =
    E_{PP}+E_{PO}+E_{OP}+E_{OO}.
\end{equation}
\end{lemma}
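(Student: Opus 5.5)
The plan is to compute the Frobenius inner product of any two blocks directly through the trace, and to use only two facts about the projectors. Each projector is symmetric and idempotent, and each is annihilated by its complement: $\Pi_\Phi\Pi_\Phi^\perp=\Pi_\Phi^\perp\Pi_\Phi=0$ and $\Pi_\Psi\Pi_\Psi^\perp=\Pi_\Psi^\perp\Pi_\Psi=0$. These hold because $\Phi_k$ and $\Psi_k$ have orthonormal columns ($\Phi_k^\top\Phi_k=I_k$, $\Psi_k^\top\Psi_k=I_k$), which gives $\Pi_\Phi^2=\Pi_\Phi$, hence $\Pi_\Phi(I-\Pi_\Phi)=0$, and likewise for $\Pi_\Psi$.

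Write a generic block as $A_aXB_b$, where $X=\delta^{(t)}W$, $A_a\in\{\Pi_\Phi,\Pi_\Phi^\perp\}$ and $B_b\in\{\Pi_\Psi,\Pi_\Psi^\perp\}$. Using symmetry and the cyclic property of the trace,
\begin{equation}
\langle A_aXB_b,\,A_cXB_d\rangle_F=\mathrm{tr}\big(B_bX^\top A_aA_cXB_d\big)=\mathrm{tr}\big(X^\top A_aA_c\,X\,B_dB_b\big).
\end{equation}
For two distinct blocks, either $a\neq c$ or $b\neq d$. If $a\neq c$, then $A_aA_c=0$ by the complementarity fact. If $b\neq d$, then $B_dB_b=0$. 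In both cases the inner product vanishes, which settles all six pairs at once.

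For the energy identity, first note that $\Pi_\Phi+\Pi_\Phi^\perp=I$ and $\Pi_\Psi+\Pi_\Psi^\perp=I$. Hence
\begin{equation}
X=(\Pi_\Phi+\Pi_\Phi^\perp)X(\Pi_\Psi+\Pi_\Psi^\perp),
\end{equation}
and expanding the right-hand side recovers the four-block decomposition. Expanding $\|X\|_F^2$ as the inner product of this sum with itself, all cross terms vanish by the orthogonality just shown. What remains is $\sum_{ij}\|\delta^{(t)}W^{ij}\|_F^2=E_{PP}+E_{PO}+E_{OP}+E_{OO}$.

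No step here is a real obstacle. The only point needing care is verifying that the projectors are orthogonal projectors. This follows from $\Phi_k$ and $\Psi_k$ being columns of the orthogonal SVD factors, even when singular values are repeated: any choice of singular vectors has orthonormal columns, so the argument does not depend on which basis is picked.
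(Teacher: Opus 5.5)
Your proposal is correct and follows essentially the same route as the paper: write the Frobenius inner product as a trace, use symmetry and cyclicity of the trace, and conclude with $\Pi_\Phi\Pi_\Phi^\perp=0$ and $\Pi_\Psi\Pi_\Psi^\perp=0$. The paper works out only the $PP$--$PO$ pair and calls the other pairs analogous, whereas your general index computation covers all six pairs at once and also checks explicitly that the four blocks sum to $\delta^{(t)}W$.
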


\begin{proof}
The principal and complementary projectors satisfy
\begin{equation}
    \Pi_\Phi\Pi_\Phi^\perp=0,
    \qquad
    \Pi_\Psi\Pi_\Psi^\perp=0.
\end{equation}
Consider, for example, the $PP$ and $PO$ blocks. Their Frobenius
inner product is
\begin{align}
&\left\langle
\Pi_\Phi\delta^{(t)}W\Pi_\Psi,
\Pi_\Phi\delta^{(t)}W\Pi_\Psi^\perp
\right\rangle_F
\nonumber\\
&=
\operatorname{tr}\left(
\Pi_\Psi
\delta^{(t)}W^\top
\Pi_\Phi
\delta^{(t)}W
\Pi_\Psi^\perp
\right)
=0,
\end{align}
where the last equality follows from cyclic invariance of the trace
and $\Pi_\Psi^\perp\Pi_\Psi=0$. The remaining pairs follow
analogously from orthogonality of the left or right projectors.
The squared norm of their sum is therefore the sum of their
squared norms.
\end{proof}

It follows immediately that the fraction of energy overlapping at
least one principal subspace is
\begin{equation}
    O_t
    =
    \frac{E_{PP}+E_{PO}+E_{OP}}{E_{\mathrm{total}}}
    =
    1-\frac{E_{OO}}{E_{\mathrm{total}}}.
\end{equation}

\subsection{Expected Overlap of an Isotropic Update}

\begin{proposition}
Let $\delta W\in\mathbb{R}^{d_{\mathrm{out}}\times
d_{\mathrm{in}}}$ have a uniformly random direction under the
Frobenius norm. Then its expected principal-subspace overlap is
\begin{equation}
    \mathbb{E}[O]
    =
    1-
    \frac{(d_{\mathrm{out}}-k)(d_{\mathrm{in}}-k)}
    {d_{\mathrm{out}}d_{\mathrm{in}}}.
\end{equation}
\end{proposition}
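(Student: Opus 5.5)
Since $O=1-E_{OO}/E_{\mathrm{total}}$ by the preceding lemma, it suffices to show
\[
\mathbb{E}\bigl[E_{OO}/\|\delta W\|_F^2\bigr]=\frac{(d_{\mathrm{out}}-k)(d_{\mathrm{in}}-k)}{d_{\mathrm{out}}d_{\mathrm{in}}}.
\]

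\textbf{Change of basis.} The SVD factors $\Phi$ and $\Psi$ are orthogonal, so the map $\delta W\mapsto \Phi^\top\delta W\Psi$ is a Frobenius isometry. It therefore preserves the uniform distribution on the unit sphere of $\mathbb{R}^{d_{\mathrm{out}}\times d_{\mathrm{in}}}$. In the new coordinates $\Pi_\Phi^\perp$ and $\Pi_\Psi^\perp$ become coordinate projectors. Consequently $E_{OO}$ is the sum of squares of the entries $M_{ij}$ of $M=\Phi^\top\delta W\Psi$ with $i>k$ and $j>k$. There are exactly $(d_{\mathrm{out}}-k)(d_{\mathrm{in}}-k)$ such index pairs.

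\textbf{Symmetry argument.} Set $D=d_{\mathrm{out}}d_{\mathrm{in}}$ and view $M/\|M\|_F$ as a uniform unit vector $u\in\mathbb{R}^{D}$. The distribution of $u$ is invariant under coordinate permutations, so every $\mathbb{E}[u_m^2]$ takes the same value. Since $\sum_m u_m^2=1$, each equals $1/D$. Linearity of expectation then gives
\[
\mathbb{E}[E_{OO}/E_{\mathrm{total}}]=\frac{(d_{\mathrm{out}}-k)(d_{\mathrm{in}}-k)}{D}.
\]
Subtracting this from $1$ yields the claim.

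Equivalently, one can take $\delta W$ to have i.i.d.\ Gaussian entries. The ratio depends only on the direction of $\delta W$, and a Gaussian matrix has uniformly distributed direction, so the same computation applies.

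\textbf{Main obstacle.} The only delicate point is that $O$ is a ratio, so one cannot simply divide expected energies. This is exactly why I normalize to the unit sphere first and then use the exchangeability of the squared coordinates, rather than computing $\mathbb{E}[E_{OO}]/\mathbb{E}[E_{\mathrm{total}}]$ for a Gaussian. The two calculations happen to agree here only because the direction is independent of the norm.
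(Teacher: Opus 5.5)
Your proof is correct and is essentially the paper's argument written in adapted coordinates: the paper vectorizes, writes the $OO$ block as $Qz$ with the Kronecker projector $Q=\Pi_\Psi^\perp\otimes\Pi_\Phi^\perp$ of rank $(d_{\mathrm{out}}-k)(d_{\mathrm{in}}-k)$, and evaluates $\operatorname{tr}\bigl(Q\,\mathbb{E}[zz^\top/\|z\|_2^2]\bigr)$ using the isotropy identity $\mathbb{E}[zz^\top/\|z\|_2^2]=I_D/D$. Your rotation into the full (square) singular bases diagonalizes $Q$ into a coordinate projector, so its rank becomes a count of index pairs, and your permutation-symmetry step proves the diagonal part of that isotropy identity, which the paper states without justification and which is all you need once $Q$ is diagonal.
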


\begin{proof}
Let
\begin{equation}
    z=\operatorname{vec}(\delta W)
    \in\mathbb{R}^{D},
    \qquad
    D=d_{\mathrm{out}}d_{\mathrm{in}}.
\end{equation}
Using the vectorization identity, the doubly orthogonal component
can be written as
\begin{equation}
    \operatorname{vec}
    \left(
    \Pi_\Phi^\perp\delta W\Pi_\Psi^\perp
    \right)
    =
    Qz,
    \qquad
    Q=
    \Pi_\Psi^\perp\otimes\Pi_\Phi^\perp.
\end{equation}
The matrix $Q$ is an orthogonal projector with rank
\begin{equation}
    \operatorname{rank}(Q)
    =
    (d_{\mathrm{out}}-k)(d_{\mathrm{in}}-k).
\end{equation}
For a uniformly random direction,
\begin{equation}
    \mathbb{E}
    \left[
    \frac{zz^\top}{\|z\|_2^2}
    \right]
    =
    \frac{I_D}{D}.
\end{equation}
Therefore,
\begin{align}
    \mathbb{E}
    \left[
    \frac{\|Qz\|_2^2}{\|z\|_2^2}
    \right]
    &=
    \operatorname{tr}
    \left(
    Q\,
    \mathbb{E}
    \left[
    \frac{zz^\top}{\|z\|_2^2}
    \right]
    \right) \\
    &=
    \frac{\operatorname{rank}(Q)}{D} \\
    &=
    \frac{(d_{\mathrm{out}}-k)(d_{\mathrm{in}}-k)}
    {d_{\mathrm{out}}d_{\mathrm{in}}}.
\end{align}
Since $O=1-\|Qz\|_2^2/\|z\|_2^2$, the stated result follows.
\end{proof}

This result gives the dimensionality-dependent null value
$O_{\mathrm{null}}$ used in the excess-overlap statistic.

\subsection{Exact Feasibility and Subspace Preservation}

\begin{proposition}
Suppose the cumulative adaptation of layer $\ell$ is parameterized
as
\begin{equation}
    \Delta W_t^{(\ell)}
    =
    s\Pi_\Phi^{(\ell)\perp}
    L_t^{(\ell)}R_t^{(\ell)}
    \Pi_\Psi^{(\ell)\perp}.
\end{equation}
Then both the cumulative adaptation and every realized update
satisfy the bilateral constraints:
\begin{equation}
    \Phi_k^{(\ell)\top}\Delta W_t^{(\ell)}=0,
    \qquad
    \Delta W_t^{(\ell)}\Psi_k^{(\ell)}=0,
\end{equation}
and
\begin{equation}
    \Phi_k^{(\ell)\top}\delta^{(t)}W^{(\ell)}=0,
    \qquad
    \delta^{(t)}W^{(\ell)}\Psi_k^{(\ell)}=0.
\end{equation}
\end{proposition}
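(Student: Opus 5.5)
The proof rests on one algebraic fact: the complementary projectors annihilate the principal bases. Since $\Phi_k$ has orthonormal columns, $\Phi_k^\top\Phi_k=I_k$, and therefore
\begin{equation}
    \Phi_k^{(\ell)\top}\Pi_\Phi^{(\ell)\perp}
    =\Phi_k^{(\ell)\top}-\Phi_k^{(\ell)\top}\Phi_k^{(\ell)}\Phi_k^{(\ell)\top}=0 .
\end{equation}
Because $\Pi_\Psi^{(\ell)\perp}$ is symmetric, the same computation gives $\Pi_\Psi^{(\ell)\perp}\Psi_k^{(\ell)}=0$.

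For the cumulative adaptation, substitute the parameterization into the constraints. Left-multiplying $\Delta W_t^{(\ell)}=s\Pi_\Phi^{(\ell)\perp}L_t^{(\ell)}R_t^{(\ell)}\Pi_\Psi^{(\ell)\perp}$ by $\Phi_k^{(\ell)\top}$ brings the product $\Phi_k^{(\ell)\top}\Pi_\Phi^{(\ell)\perp}=0$ to the front, so $\Phi_k^{(\ell)\top}\Delta W_t^{(\ell)}=0$. Right-multiplying by $\Psi_k^{(\ell)}$ brings $\Pi_\Psi^{(\ell)\perp}\Psi_k^{(\ell)}=0$ to the end, so $\Delta W_t^{(\ell)}\Psi_k^{(\ell)}=0$. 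Neither step uses any property of $L_t^{(\ell)}$, $R_t^{(\ell)}$, or $s$, so both constraints hold for every value of the trainable factors and at every step $t$.

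For the realized update, write it as a telescoping difference of cumulative adaptations:
\begin{equation}
    \delta^{(t)}W^{(\ell)}=W_t^{(\ell)}-W_{t-1}^{(\ell)}
    =\Delta W_t^{(\ell)}-\Delta W_{t-1}^{(\ell)} .
\end{equation}
This holds because both $W_t^{(\ell)}=W_{\mathrm{ref}}^{(\ell)}+\Delta W_t^{(\ell)}$ and $W_{t-1}^{(\ell)}=W_{\mathrm{ref}}^{(\ell)}+\Delta W_{t-1}^{(\ell)}$ share the same frozen $W_{\mathrm{ref}}^{(\ell)}$. The constraint maps $X\mapsto\Phi_k^{(\ell)\top}X$ and $X\mapsto X\Psi_k^{(\ell)}$ are linear, and both terms on the right vanish under them by the previous step. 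Hence both constraints hold for $\delta^{(t)}W^{(\ell)}$.

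Equivalently, the feasible set $\{X:\Phi_k^\top X=0,\ X\Psi_k=0\}$ is a linear subspace, and both adaptations lie in it.

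There is no real obstacle. The only point to state explicitly is the modeling assumption behind the telescoping step: the projectors $\Pi_\Phi^{(\ell)\perp}$ and $\Pi_\Psi^{(\ell)\perp}$ are computed once from $W_{\mathrm{ref}}^{(\ell)}$ and held fixed throughout training, and the pretrained weight itself is never updated. Only $L^{(\ell)}$ and $R^{(\ell)}$ change, whatever optimizer produces them. The same argument then covers updates produced by Adam and momentum, since no property of the optimizer enters.
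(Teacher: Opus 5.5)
Your proof is correct and follows the paper's argument. The identities $\Phi_k^{(\ell)\top}\Pi_\Phi^{(\ell)\perp}=0$ and $\Pi_\Psi^{(\ell)\perp}\Psi_k^{(\ell)}=0$ give both constraints for the cumulative adaptation, and linearity applied to $\delta^{(t)}W^{(\ell)}=\Delta W_t^{(\ell)}-\Delta W_{t-1}^{(\ell)}$ transfers them to the realized update. Your derivation of these identities from $\Phi_k^\top\Phi_k=I_k$, and your note that the projectors and $W_{\mathrm{ref}}^{(\ell)}$ stay frozen, spell out what the paper states only ``by construction.''
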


\begin{proof}
By construction,
\begin{equation}
    \Phi_k^{(\ell)\top}\Pi_\Phi^{(\ell)\perp}=0,
    \qquad
    \Pi_\Psi^{(\ell)\perp}\Psi_k^{(\ell)}=0.
\end{equation}
Hence,
\begin{align}
    \Phi_k^{(\ell)\top}\Delta W_t^{(\ell)}
    &=
    s\Phi_k^{(\ell)\top}
    \Pi_\Phi^{(\ell)\perp}
    L_t^{(\ell)}R_t^{(\ell)}
    \Pi_\Psi^{(\ell)\perp}
    =0,\\
    \Delta W_t^{(\ell)}\Psi_k^{(\ell)}
    &=
    s\Pi_\Phi^{(\ell)\perp}
    L_t^{(\ell)}R_t^{(\ell)}
    \Pi_\Psi^{(\ell)\perp}
    \Psi_k^{(\ell)}
    =0.
\end{align}

The realized update is
\begin{equation}
    \delta^{(t)}W^{(\ell)}
    =
    \Delta W_t^{(\ell)}-\Delta W_{t-1}^{(\ell)}.
\end{equation}
Because the constraints are linear and both cumulative adaptations
satisfy them, their difference satisfies them as well.
\end{proof}

\begin{corollary}
For every
$x\in\operatorname{span}(\Psi_k^{(\ell)})$ and
$y\in\operatorname{span}(\Phi_k^{(\ell)})$,
\begin{equation}
    W_t^{(\ell)}x
    =
    W_{\mathrm{ref}}^{(\ell)}x,
    \qquad
    y^\top W_t^{(\ell)}
    =
    y^\top W_{\mathrm{ref}}^{(\ell)}.
\end{equation}
\end{corollary}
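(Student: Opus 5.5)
The plan is to write the adapted weight as $W_t^{(\ell)} = W_{\mathrm{ref}}^{(\ell)} + \Delta W_t^{(\ell)}$ and reduce both identities to the bilateral constraints on the cumulative adaptation established in the preceding proposition. With this decomposition, $W_t^{(\ell)}x = W_{\mathrm{ref}}^{(\ell)}x$ is equivalent to $\Delta W_t^{(\ell)}x = 0$. Likewise, $y^\top W_t^{(\ell)} = y^\top W_{\mathrm{ref}}^{(\ell)}$ is equivalent to $y^\top \Delta W_t^{(\ell)} = 0$. The argument therefore needs only the facts about $\Delta W_t^{(\ell)}$, not those about the stepwise updates $\delta^{(t)}W^{(\ell)}$.

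For the input side, I take any $x \in \operatorname{span}(\Psi_k^{(\ell)})$ and write $x = \Psi_k^{(\ell)} a$ for some coefficient vector $a \in \mathbb{R}^k$. The preceding proposition gives $\Delta W_t^{(\ell)}\Psi_k^{(\ell)} = 0$, so
\begin{equation}
\Delta W_t^{(\ell)} x = \bigl(\Delta W_t^{(\ell)}\Psi_k^{(\ell)}\bigr) a = 0 .
\end{equation}

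The output side is symmetric. I write $y = \Phi_k^{(\ell)} b$ with $b \in \mathbb{R}^k$ and use $\Phi_k^{(\ell)\top}\Delta W_t^{(\ell)} = 0$ to obtain
\begin{equation}
y^\top \Delta W_t^{(\ell)} = b^\top \Phi_k^{(\ell)\top}\Delta W_t^{(\ell)} = 0 .
\end{equation}

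No step here is a real obstacle. The one point that needs care is the modeling assumption that the only change to layer $\ell$ is the parameterized adaptation, so that $W_t^{(\ell)} - W_{\mathrm{ref}}^{(\ell)} = \Delta W_t^{(\ell)}$ exactly. That holds when the base weights are frozen, or equivalently when the low-rank update is merged into them. I will state it explicitly, since under full-parameter updates the corollary would not hold.
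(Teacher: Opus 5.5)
Your proposal is correct and matches the paper's proof essentially step for step: write $x=\Psi_k^{(\ell)}a$ and $y=\Phi_k^{(\ell)}b$, apply the bilateral constraints on the cumulative adaptation $\Delta W_t^{(\ell)}$, and substitute $W_t^{(\ell)}=W_{\mathrm{ref}}^{(\ell)}+\Delta W_t^{(\ell)}$. The caveat you flag is already built into the paper's setup: $\Delta W_t^{(\ell)}$ is defined as $W_t^{(\ell)}-W_{\mathrm{ref}}^{(\ell)}$, and the preceding proposition assumes this difference is exactly the projected low-rank term.
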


\begin{proof}
Write $x=\Psi_k^{(\ell)}a$ and
$y=\Phi_k^{(\ell)}b$. The bilateral constraints imply
$\Delta W_t^{(\ell)}x=0$ and
$y^\top\Delta W_t^{(\ell)}=0$. Substituting
$W_t^{(\ell)}=W_{\mathrm{ref}}^{(\ell)}
+\Delta W_t^{(\ell)}$ proves the result.
\end{proof}

The corollary is a layer-level algebraic guarantee for the selected
subspaces; it does not imply unconditional preservation of all
model-level capabilities.

\subsection{Dimension and Expressivity of the Feasible Space}

\begin{proposition}
Define the bilateral feasible space
\begin{equation}
    \mathcal{S}
    =
    \left\{
    A:
    \Phi_k^\top A=0,\;
    A\Psi_k=0
    \right\}.
\end{equation}
Then
\begin{equation}
    \mathcal{S}
    =
    \left\{
    \Pi_\Phi^\perp Z\Pi_\Psi^\perp:
    Z\in\mathbb{R}^{d_{\mathrm{out}}\times d_{\mathrm{in}}}
    \right\},
\end{equation}
and
\begin{equation}
    \dim(\mathcal{S})
    =
    (d_{\mathrm{out}}-k)(d_{\mathrm{in}}-k).
\end{equation}
Moreover, every matrix in $\mathcal{S}$ with rank at most $r$
can be represented by the GCPO low-rank parameterization.
\end{proposition}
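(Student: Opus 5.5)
We prove the set identity by double inclusion, read off the dimension from the vectorized projector already used in the proof of the isotropic-overlap proposition, and finally construct explicit low-rank factors for any feasible matrix of rank at most $r$.

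\emph{Set equality.} For ``$\supseteq$'', take $A=\Pi_\Phi^\perp Z\Pi_\Psi^\perp$. Since $\Phi_k^\top\Pi_\Phi^\perp=0$ and $\Pi_\Psi^\perp\Psi_k=0$, we get $\Phi_k^\top A=0$ and $A\Psi_k=0$, exactly as in the exact-feasibility proposition. For ``$\subseteq$'', take $A\in\mathcal{S}$. The condition $\Phi_k^\top A=0$ gives $\Pi_\Phi A=\Phi_k\Phi_k^\top A=0$, so $\Pi_\Phi^\perp A=A$. Likewise $A\Psi_k=0$ gives $A\Pi_\Psi=0$, so $A\Pi_\Psi^\perp=A$. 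Hence $A=\Pi_\Phi^\perp A\Pi_\Psi^\perp$, and $Z=A$ witnesses membership. This also shows that $\mathcal{S}$ is the image of the linear map $Z\mapsto\Pi_\Phi^\perp Z\Pi_\Psi^\perp$.

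\emph{Dimension.} Under vectorization this map is $Q=\Pi_\Psi^\perp\otimes\Pi_\Phi^\perp$. The map is idempotent, and $Q$ is symmetric, so $Q$ is an orthogonal projector. Its rank is the product of the ranks of its factors:
\[
\operatorname{rank}(Q)=\operatorname{rank}(\Pi_\Psi^\perp)\,\operatorname{rank}(\Pi_\Phi^\perp)=(d_{\mathrm{in}}-k)(d_{\mathrm{out}}-k),
\]
because $\Phi_k$ and $\Psi_k$ have orthonormal columns. Therefore $\dim\mathcal{S}=\operatorname{rank}(Q)=(d_{\mathrm{out}}-k)(d_{\mathrm{in}}-k)$. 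As an alternative, one can complete $\Phi_k$ and $\Psi_k$ to orthonormal bases $[\Phi_k,\Phi_\perp]$ and $[\Psi_k,\Psi_\perp]$. Then $\mathcal{S}=\{\Phi_\perp B\Psi_\perp^\top\}$, and $B\mapsto\Phi_\perp B\Psi_\perp^\top$ is injective on $\mathbb{R}^{(d_{\mathrm{out}}-k)\times(d_{\mathrm{in}}-k)}$.

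\emph{Expressivity.} Let $A\in\mathcal{S}$ with $\operatorname{rank}(A)\le r$, and write $A=UV$ with $U\in\mathbb{R}^{d_{\mathrm{out}}\times r}$ and $V\in\mathbb{R}^{r\times d_{\mathrm{in}}}$; a truncated SVD, padded with zero columns and rows if the rank is below $r$, provides this. Set $L=U/s$ and $R=V$, assuming $s\neq0$. By the idempotence identity from the first step,
\[
s\,\Pi_\Phi^\perp LR\,\Pi_\Psi^\perp=\Pi_\Phi^\perp A\,\Pi_\Psi^\perp=A,
\]
so $A$ is represented by the GCPO parameterization. The only real subtlety is this last step. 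One must use $\Pi_\Phi^\perp A\Pi_\Psi^\perp=A$ rather than require $U$ and $V$ themselves to lie in the complements, and one must handle the scaling $s$ (or $\alpha$) and the rank padding. Once the first step is established, both are routine.
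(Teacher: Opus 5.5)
Your proof is correct and follows the paper's overall structure. It differs in two local steps.

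\emph{Dimension.} The paper completes to orthonormal complement bases $\Phi_\perp,\Psi_\perp$. It writes each feasible $A$ uniquely as $\Phi_\perp B\Psi_\perp^\top$ and counts the entries of $B$. Your main argument instead computes the rank of the vectorized map $\Pi_\Psi^\perp\otimes\Pi_\Phi^\perp$, using that rank is multiplicative under Kronecker products. This reuses the vectorization machinery from the isotropic-overlap proposition. Your stated alternative is exactly the paper's route.

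\emph{Expressivity.} The paper factors the coordinate matrix as $B=UV$ and lifts it to $L=\Phi_\perp U$, $R=V\Psi_\perp^\top$. These factors already lie in the complements, so the factor-projection step of the GCPO algorithm leaves them unchanged ($\bar L=L$, $\bar R=R$). You factor $A$ itself and rely on $\Pi_\Phi^\perp A\Pi_\Psi^\perp=A$. That needs no complement basis and works for any factorization. You also make explicit the rescaling by $s$ and the zero-padding to exactly $r$ latent dimensions. The paper leaves these implicit as ``up to its scaling coefficient'' and ``at most $r$ latent dimensions.''

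Finally, you prove the inclusion $\supseteq$ directly. The paper leaves it implicit, relying on the exact-feasibility proposition.
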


\begin{proof}
If $A\in\mathcal{S}$, then its columns lie in
$\operatorname{span}(\Phi_k)^\perp$, implying
$\Pi_\Phi^\perp A=A$. Similarly, $A\Psi_k=0$ implies
$A\Pi_\Psi^\perp=A$. Thus,
\begin{equation}
    A=\Pi_\Phi^\perp A\Pi_\Psi^\perp.
\end{equation}

Let $\Phi_\perp$ and $\Psi_\perp$ be orthonormal bases for the
corresponding complementary subspaces. Every feasible matrix has
a unique representation
\begin{equation}
    A=\Phi_\perp B\Psi_\perp^\top,
    \qquad
    B\in
    \mathbb{R}^{(d_{\mathrm{out}}-k)
    \times(d_{\mathrm{in}}-k)}.
\end{equation}
Therefore, the dimension of $\mathcal{S}$ is
$(d_{\mathrm{out}}-k)(d_{\mathrm{in}}-k)$.

If $\operatorname{rank}(A)\leq r$, then
$\operatorname{rank}(B)\leq r$, so $B$ admits a factorization
$B=UV$ with at most $r$ latent dimensions. Taking
\begin{equation}
    L=\Phi_\perp U,
    \qquad
    R=V\Psi_\perp^\top
\end{equation}
gives
\begin{equation}
    A
    =
    \Pi_\Phi^\perp LR\Pi_\Psi^\perp,
\end{equation}
which is exactly the GCPO parameterization up to its scaling
coefficient.
\end{proof}

\section{Method Details}

\subsection{Global Overlap Aggregation and Robustness}
\label{appendix:global_overlap}
\paragraph{Global aggregation.}
For each adapted matrix $\ell\in\mathcal{M}$, we first compute
the layer-wise dimension-corrected overlap
\begin{equation}
O_{t,\ell}^{\mathrm{excess}}
=
\left(
1-
\frac{E_{OO}^{(t,\ell)}}
     {E_{\mathrm{total}}^{(t,\ell)}}
\right)
-
\left[
1-
\frac{
(d_{\mathrm{out}}^{(\ell)}-k_\ell)
(d_{\mathrm{in}}^{(\ell)}-k_\ell)
}{
d_{\mathrm{out}}^{(\ell)}
d_{\mathrm{in}}^{(\ell)}
}
\right].
\end{equation}
The global statistic shown in Figure~\ref{fig:intrusion_accuracy_coupling} is the unweighted macro-average
\begin{equation}
O_{t,\mathrm{global}}^{\mathrm{excess}}
=
\frac{1}{|\mathcal{M}|}
\sum_{\ell\in\mathcal{M}}
O_{t,\ell}^{\mathrm{excess}}.
\end{equation}
Normalization and null correction are therefore performed
within each matrix before aggregation. Consequently, each
adapted matrix contributes equally, preventing matrices with
larger dimensions or update norms from mechanically
dominating the global statistic. The displayed curve applies
a five-step moving average only after this layer-wise
aggregation.

\subsection{Algorithm Details}
\label{appendix:method}

Algorithm~\ref{alg:gcpo} summarizes the implementation of GCPO. The principal subspaces are computed once before training, while the projected low-rank factors are used for both policy optimization and rollout generation.

\begin{algorithm}[tb]
\caption{Geometrically Constrained Policy Optimization}
\label{alg:gcpo}
\textbf{Input}: Frozen policy $\theta_0$ and adapted layers
$\mathcal{M}$\\
\textbf{Parameter}: Protected rank $k$, adaptation rank $r$,
scaling $\alpha$, and training steps $T$\\
\textbf{Output}: Trained factors
$\{L^{(\ell)},R^{(\ell)}\}_{\ell\in\mathcal{M}}$
\begin{algorithmic}[1]
\STATE Set $s\leftarrow\alpha/r$.
\FOR{each layer $\ell\in\mathcal{M}$}
    \STATE Compute the top-$k$ singular vectors
    $\Phi_k^{(\ell)}$ and $\Psi_k^{(\ell)}$ of
    $W_{\mathrm{ref}}^{(\ell)}$.
    \STATE Initialize $L^{(\ell)}$ to zero and
    $R^{(\ell)}$ with the standard LoRA initialization.
\ENDFOR
\FOR{$t=1,\ldots,T$}
    \FOR{each layer $\ell\in\mathcal{M}$}
        \STATE $\bar L^{(\ell)}
        \leftarrow L^{(\ell)}
        -\Phi_k^{(\ell)}
        \bigl(\Phi_k^{(\ell)\top}L^{(\ell)}\bigr)$.
        \STATE $\bar R^{(\ell)}
        \leftarrow R^{(\ell)}
        -\bigl(R^{(\ell)}\Psi_k^{(\ell)}\bigr)
        \Psi_k^{(\ell)\top}$.
        \STATE Set
        $W_t^{(\ell)}
        \leftarrow W_{\mathrm{ref}}^{(\ell)}
        +s\bar L^{(\ell)}\bar R^{(\ell)}$.
    \ENDFOR
    \STATE Synchronize the projected factors to the rollout engine.
    \STATE Generate rollouts and compute the policy-optimization loss.
    \STATE Update the raw factors
    $\{L^{(\ell)},R^{(\ell)}\}_{\ell\in\mathcal{M}}$.
\ENDFOR
\STATE \textbf{return}
$\{L^{(\ell)},R^{(\ell)}\}_{\ell\in\mathcal{M}}$.
\end{algorithmic}
\end{algorithm}

\section{Experimental Details}
\label{appendix:experimental_setup}

\subsection{Datasets and Splits}
\label{appendix:data_splits}

Table~\ref{tab:dataset_splits} summarizes the datasets used for
RL training and held-out evaluation. MATH500 and HumanEval+ are
established benchmarks for mathematical reasoning and code
generation, respectively. Following the common practice of
constructing fixed train--test splits for controlled RL
evaluation~\cite{SDPO}, we build task-specific splits from their
official evaluation sets and keep them identical across all
methods. Consequently, the corresponding results measure
performance on our held-out subsets and should not be interpreted
as official full-benchmark scores.

\begin{table}[tbp]
    \centering
    \caption{Dataset splits used for RL training and held-out
    evaluation.}
    \label{tab:dataset_splits}
    \begin{tabular}{lcccl}
        \toprule
        \textbf{Task}
        & \textbf{Total}
        & \textbf{Train}
        & \textbf{Test}
        & \textbf{Protocol} \\
        \midrule
        MATH500
        & 500
        & 450
        & 50
        & 9:1, seed 42. \\

        HumanEval+
        & 164
        & 147
        & 17
        & 9:1, seed 42. \\

        ToolAlpaca
        & 4,114
        & 4,046
        & 68
        & 407:10 families \\
        \bottomrule
    \end{tabular}
\end{table}

For MATH500~\cite{MATH500} and HumanEval+~\cite{HumanEval+},
no problem identifier appears in both the training and evaluation
splits. Each HumanEval+ problem retains its complete EvalPlus
functional-test suite; tests associated with the same programming
problem are never divided between the two splits.

For ToolAlpaca, constructed from ToolAlpaca~\cite{ToolAlpaca}, the
held-out set contains examples from 10 API families that do not
appear during training: Axolotl, Auth0, A B\'iblia Digital,
Apache Superset, Am\'ethyste, Abstract Public Holidays,
AbuseIPDB, 1Forge, Lob.com, and AniAPI. This protocol evaluates
generalization to previously unseen tool families rather than to
new examples from APIs observed during training.

The MATH500 and HumanEval+ splits are generated once with seed 42
and kept fixed across all methods, backbone models, and training
runs. This data-splitting seed is independent of the three random
seeds used for repeated training. Therefore, the reported
variation across runs reflects training and sampling stochasticity,
rather than changes in the train--test partition.

\subsection{Training and Evaluation Settings}
\label{appendix:training_settings}

Table~\ref{tab:shared_training_settings} summarizes the settings
shared across tasks and methods unless overridden by a
method-specific configuration. All main model--task--method
configurations are independently trained with three random seeds.
For each configuration, we report the mean and standard deviation
of the three resulting task-level scores.

\begin{table}[tbp]
    \centering
    \small
    \setlength{\tabcolsep}{4pt}
    \caption{Shared training, rollout, and evaluation settings.}
    \label{tab:shared_training_settings}
    \begin{tabular}{@{}
        >{\raggedright\arraybackslash}p{0.42\columnwidth}
        >{\raggedright\arraybackslash}p{0.52\columnwidth}
        @{}}
        \toprule
        \textbf{Configuration}
        & \textbf{Setting} \\
        \midrule
        Train batch size
        & 32 prompts per optimization step \\

        Per-GPU micro-batch size
        & 1 \\

        Optimizer
        & AdamW; gradient clipping at 1.0 \\

        Data order
        & Shuffled during training \\

        Training precision
        & \texttt{bfloat16} with Fully Sharded Data Parallel
        (FSDP) \\

        Rollout engine
        & vLLM with tensor parallelism of 2 \\

        Maximum prompt length
        & 2,048 tokens \\

        Maximum response length
        & 8,192 tokens \\

        Training decoding
        & Temperature 1.0, top-$p=1.0$, and top-$k=-1$ \\

        Evaluation decoding
        & Temperature 0.6 and top-$p=0.95$ \\

        Evaluation samples
        & 16 sampled responses per held-out example \\

        Evaluation metric
        & Majority@16 with 1,000 bootstrap resamples \\

        Evaluation interval
        & Every 5 optimization steps \\

        Checkpoint interval
        & Every 50 steps by default; every 100 steps for DAPO \\

        Checkpoint retention
        & At most one checkpoint retained per run \\

        Chat template
        & Model-specific template; thinking mode is disabled
        for Qwen3 \\

        Independent runs
        & 3 random seeds per main configuration \\

        Result reporting
        & Mean $\pm$ standard deviation over the 3 independent runs \\
        \bottomrule
    \end{tabular}
\end{table}

For each training seed, we generate 16 responses per held-out example and report majority@16 to reduce sensitivity to individual stochastic generations and evaluate the model's consistency across multiple samples. We estimate majority@16 using 1,000 bootstrap trials, each of which resamples 16 responses with replacement and scores the majority-voted answer. Correctness is averaged over bootstrap trials and examples to obtain one score per training run, and final results are reported as the mean and standard deviation over three independent runs.

\paragraph{Effective Training Horizons.}
The common training loop uses \texttt{drop\_last=True} and a
default limit of 30 epochs. As a result, the effective number of
optimization steps depends on the size of the corresponding
training split. Table~\ref{tab:effective_training_steps}
summarizes the resulting training horizons.

\begin{table}[tbp]
    \centering
    \caption{Effective training horizons. All runs are configured
    for at most 300 steps, with a batch size of 32,
    \texttt{drop\_last=True}, and at most 30 epochs.}
    \label{tab:effective_training_steps}
    \begin{tabular}{@{}lcccc@{}}
        \toprule
        \textbf{Task}
        & \shortstack{\textbf{Train}\\\textbf{size}}
        & \shortstack{\textbf{Steps}\\\textbf{/ epoch}}
        & \shortstack{\textbf{30-epoch}\\\textbf{cap}}
        & \shortstack{\textbf{Effective}\\\textbf{steps}} \\
        \midrule
        MATH500
        & 450
        & 14
        & 420
        & 300 \\

        HumanEval+
        & 147
        & 4
        & 120
        & 120 \\

        ToolAlpaca
        & 4,046
        & 126
        & 3,780
        & 300 \\
        \bottomrule
    \end{tabular}
\end{table}

MATH500 and ToolAlpaca can reach the configured limit of 300
optimization steps. HumanEval+ contains 147 training problems,
yielding four complete batches per epoch and therefore at most
120 steps under the default 30-epoch limit. For cross-task evaluation, we use the same task-specific checkpointfor every method and seed: step 250 for MATH500, step 100 forHumanEval+, and step 150 for ToolAlpaca. We do not select a separate best-validation checkpoint for each seed.

\subsection{Method-specific Hyperparameters}
\label{appendix:method_hyperparameters}

We retain method-specific optimization settings rather than
forcing all algorithms to share a common rollout count, learning
rate, or clipping rule. Table~\ref{tab:method_hyperparameters}
lists the principal hyperparameters. The same configuration is
used across the three independent runs of each method.

\begin{table}[tbp]
    \centering
    \caption{Method-specific rollout and optimization
    hyperparameters.}
    \label{tab:method_hyperparameters}
    \begin{tabular}{@{}lccp{0.42\columnwidth}@{}}
        \toprule
        \textbf{Method}
        & \shortstack{\textbf{Roll-}\\\textbf{outs}}
        & \shortstack{\textbf{Learning}\\\textbf{rate}}
        & \textbf{Principal settings} \\
        \midrule
        GRPO
        & 16
        & $1\times10^{-5}$
        & Mini-batch size 32; clipping ratio 0.2. \\

        GSPO
        & 16
        & $1\times10^{-6}$
        & Lower and upper clipping thresholds of
        $3\times10^{-4}$ and $4\times10^{-4}$. \\

        GMPO
        & 16
        & $1\times10^{-6}$
        & lower and upper clipping thresholds of 0.4. \\

        DAPO
        & 16
        & $1\times10^{-6}$
        & Train batch size 32; generation batch size 64. \\
        \bottomrule
    \end{tabular}
\end{table}

The parameter-efficient configurations are summarized in
Table~\ref{tab:peft_settings}. LoRA and GCPO adapt all linear
layers. For matched comparisons, the full-parameter, LoRA, and
GCPO variants use identical data splits, batch sizes, rollout
settings, and evaluation protocols; they differ only in the
parameterization of the policy update.

\begin{table}[tbp]
    \centering
    \caption{Parameter-efficient adaptation settings used in the
    main experiments.}
    \label{tab:peft_settings}
    \begin{tabular}{@{}lcccc@{}}
        \toprule
        \textbf{Variant}
        & \textbf{Modules}
        & \textbf{Rank $r$}
        & \textbf{$\alpha$}
        & \shortstack{\textbf{Protected} \textbf{rank $k$}} \\
        \midrule
        LoRA
        & All linear
        & 32
        & 16
        & -- \\

        GCPO
        & All linear
        & 32
        & 16
        & 8 \\
        \bottomrule
    \end{tabular}
\end{table}

The main GCPO configuration uses
scaling parameter $\alpha=16$, and protects the top $k=8$
left and right singular directions of each adapted weight matrix.
The ablation studies additionally consider
$k\in\{4,8,16,32,64\}$ and adaptation ranks
$r\in\{16,32,64\}$.

\subsection{Reward and Evaluation Protocols}
\label{appendix:reward_protocols}

All three tasks use binary response-level rewards.
Table~\ref{tab:reward_protocols} summarizes the task-specific prediction extraction and correctness criteria. A response receives a binary reward of 1 if it satisfies the task-specific criterion and 0 otherwise.

\begin{table}[tbp]
    \centering
    \small
    \setlength{\tabcolsep}{3pt}
    \caption{Task-specific response extraction and scoring.}
    \label{tab:reward_protocols}
    \begin{tabular}{@{}p{0.19\columnwidth}
                        p{0.30\columnwidth}
                        p{0.43\columnwidth}@{}}
        \toprule
        \textbf{Task}
        & \textbf{Extraction}
        & \textbf{Correctness} \\
        \midrule

        MATH500
        & Final \texttt{\textbackslash boxed\{\}} expression
        in the last 100 characters
        & Normalized string matching with Math-Verify
        equivalence checking \\

        HumanEval+
        & Longest Markdown code block
        & Pass all EvalPlus functional tests within 5 seconds \\

        ToolAlpaca
        & Action names and input JSON
        & Unordered action-name match and merged argument-JSON
        match; no tool execution \\

        \bottomrule
    \end{tabular}
\end{table}

For MATH500, the model is prompted to provide its final answer
inside \texttt{\textbackslash boxed\{\}}. For HumanEval+, each
problem retains its full functional-test suite, and a generated
program is considered correct only when it passes every test.
The 5-second execution limit is used by the evaluator, regardless
of the shorter time limit stated in the textual prompt.

The ToolAlpaca evaluator measures static structural correctness
rather than actual API execution. It does not evaluate the order
of multiple actions, and arguments from multiple predicted actions
are merged before comparison. Accordingly, we describe this metric
as action-name and argument matching rather than execution
correctness.

For every training seed, each held-out example is evaluated using
16 sampled responses under the decoding configuration in
Table~\ref{tab:shared_training_settings}. The resulting responses
are aggregated into a single task-level score for that seed.
Final results are reported as the mean and standard deviation
across the three independently trained seeds.

\subsection{Computational Resources}
\label{appendix: resource}

All experiments were conducted on a single compute node equipped
with four NVIDIA A100 GPUs. Table~\ref{tab:computational_resources}
summarizes the hardware configuration and distributed training
setup. The 48-hour allocation denotes the maximum wall-clock
budget assigned to each job, rather than the actual runtime of
every training run.

\begin{table}[tbp]
    \centering
    \caption{Computational resources and distributed training
    configuration used in our experiments.}
    \label{tab:computational_resources}
    \begin{tabular}{@{}p{0.34\columnwidth}p{0.57\columnwidth}@{}}
        \toprule
        \textbf{Item} & \textbf{Configuration} \\
        \midrule
        Compute node
        & Single node \\
        
        GPU
        & $4\times$ NVIDIA A100 80\,GB \\
        
        CPU
        & 32 CPU cores \\
        
        Job time limit
        & 48 hours per submitted job \\
        
        Training precision
        & \texttt{bfloat16} \\
        
        Distributed training
        & Fully Sharded Data Parallel (FSDP) \\
        
        Rollout engine
        & vLLM \\
        
        Rollout parallelism
        & Tensor parallelism with $\mathrm{TP}=2$ \\
        
        Micro-batch size
        & 1 sample per GPU \\
        \bottomrule
    \end{tabular}
\end{table}

For the matched GRPO comparison, full-parameter GRPO, GRPO-LoRA, and GCPO use identical data splits, batch sizes, rollout configurations, and evaluation protocols. They differ only in the parameterization and geometric constraint of the policy update.

The effective number of optimization steps additionally depends
on the dataset size because the common trainer uses
\texttt{drop\_last=True} and terminates after 30 epochs.
Table~\ref{tab:effective_training_steps} reports the resulting
training horizon for each task.

For MATH500 and ToolAlpaca, the epoch limit permits the trainer to
reach the configured maximum of 300 steps. For HumanEval+, the
147-example training split produces only four complete batches
per epoch, resulting in at most 120 optimization steps. Since
checkpoints are saved every 50 steps, we use the step-100
checkpoint for cross-task evaluation. DAPO uses a separate epoch
budget of 3,000 and is therefore not subject to the default
120-step limit on HumanEval+.

\section{Additional Experiments}
\label{appendix: additional_exp}

\subsection{Additional Overlap Trajectories}
\label{appendix:additional_overlap}

Figure~\ref{fig:additional_overlap} complements the
representative cases in Figure~\ref{fig:intrusion_accuracy_coupling}
with additional trajectories on HumanEval+ and ToolAlpaca.
We use the same excess-overlap measure, 5-step moving
average, and validation protocol as in Section~\ref{sec:diagnose}.
Elevated overlap again tends to accompany or precede
validation degradation, although its timing and magnitude
vary across tasks. To avoid redundant per-seed plots, we
show one fixed-seed trajectory for each remaining task.

\begin{figure*}[t]
    \centering
    \includegraphics[width=1\linewidth]{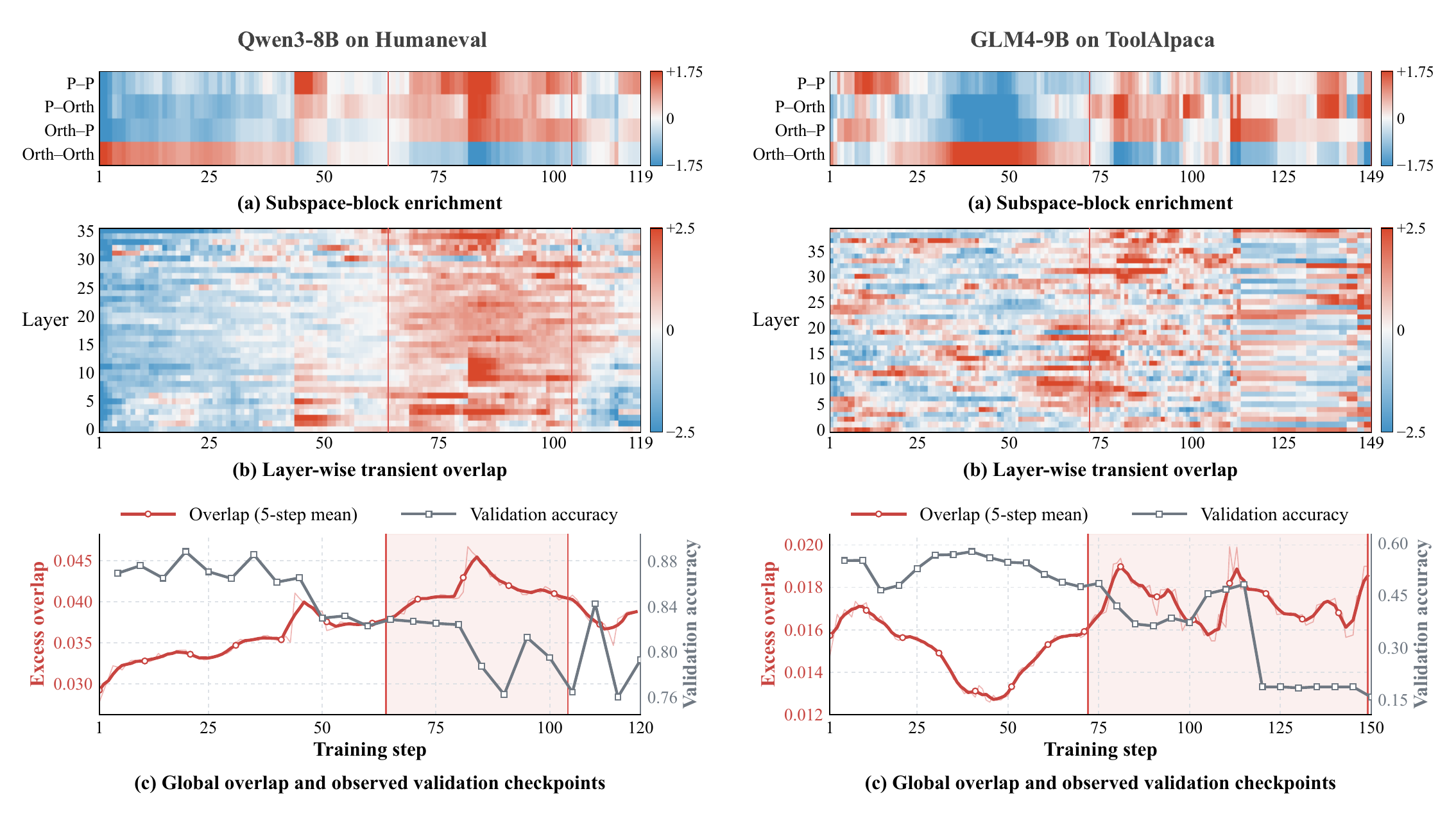}
    \caption{Stepwise update overlap and validation performance. The red curve is the 5-step moving average of excess principal-subspace overlap; the blue curve is validation accuracy. In both runs, episodes of elevated overlap accompany subsequent gradual degradation. This observation is correlational.
}
    \label{fig:additional_overlap}
\end{figure*}

\subsection{Additional Controlled Interventions}
\label{appendix:intervention}

We repeat the layer-wise norm-matched intervention on three
additional model--task--checkpoint configurations. As shown
in Figure~\ref{fig:additional_interventions_1} and~\ref{fig:additional_interventions_2}, increasing the
principal-overlapping component consistently produces a
dose-dependent accuracy drop across settings. These results
support the robustness of the local intervention effect beyond
the main configuration.

\begin{figure}[t]
    \centering
    \includegraphics[width=\linewidth]{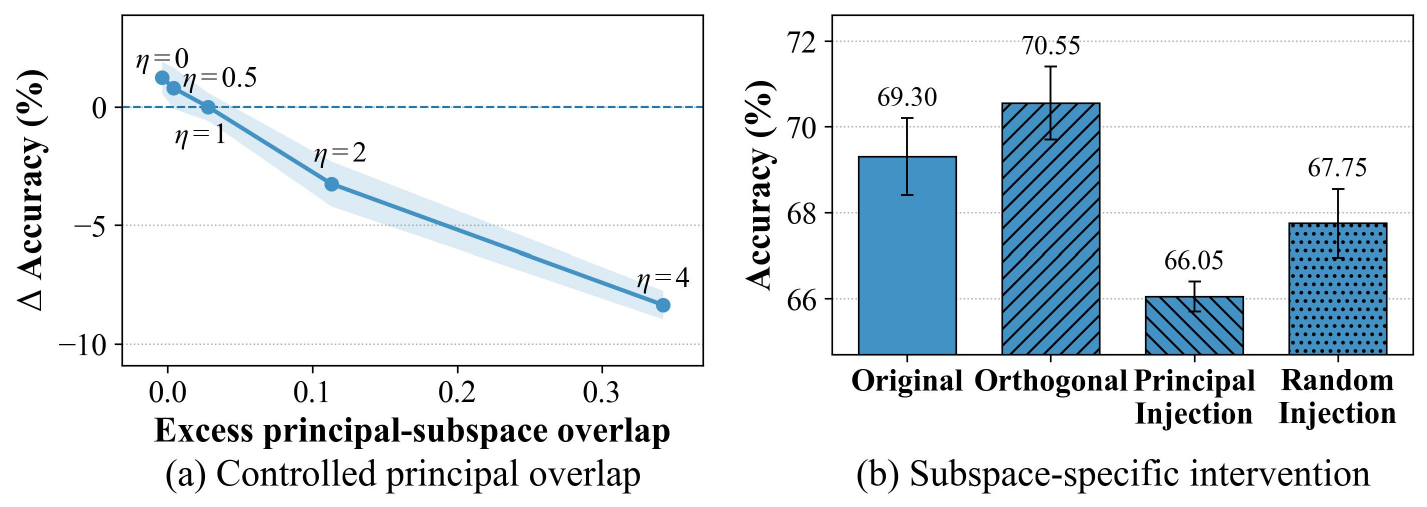}
    \caption{
    Additional controlled intervention on the step-125 GRPO update of Qwen3-8B on MATH500. (a) Increasing the principal-overlapping component under
    layer-wise norm matching produces a dose-dependent accuracy drop.
    (b) Orthogonalization improves accuracy, whereas matched
    principal-subspace injection is substantially more harmful than a
    random-subspace intervention.
    }
    \label{fig:additional_interventions_1}
\end{figure}

\begin{figure}[t]
    \centering
    \includegraphics[width=\linewidth]{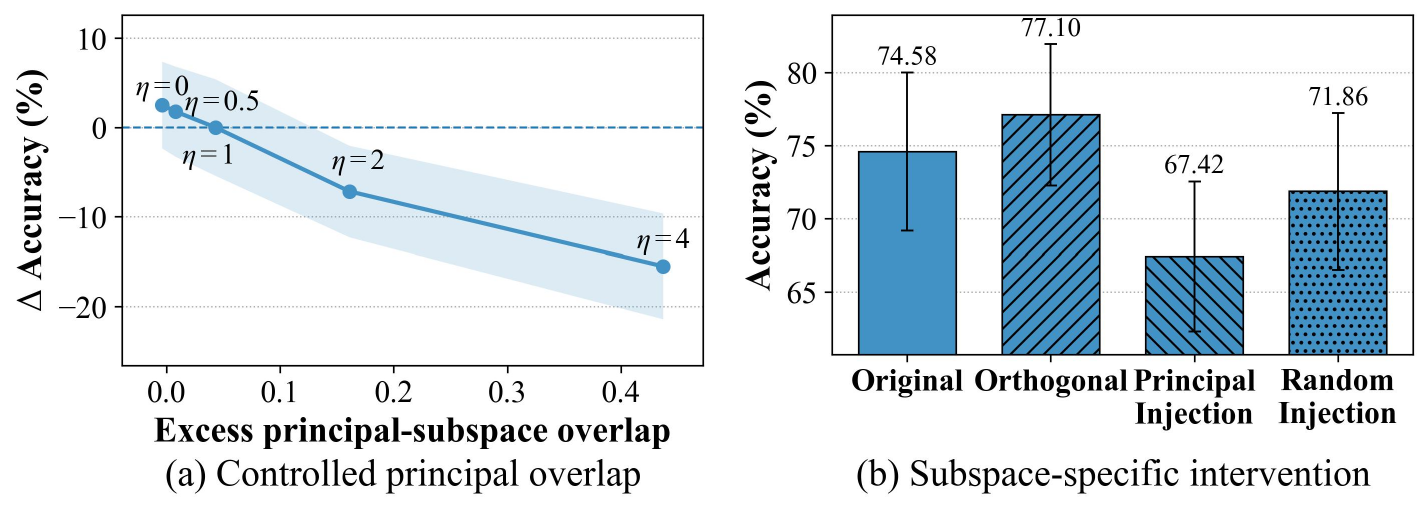}
    \caption{
    Additional controlled intervention on the step-70 GRPO update of GLM on Humaneval. (a) Increasing the principal-overlapping component under
    layer-wise norm matching produces a dose-dependent accuracy drop.
    (b) Orthogonalization improves accuracy, whereas matched
    principal-subspace injection is substantially more harmful than a
    random-subspace intervention.
    }
    \label{fig:additional_interventions_2}
\end{figure}

\end{document}